\documentclass{article}

\usepackage[preprint]{neurips_2026}

\usepackage[utf8]{inputenc} 
\usepackage[T1]{fontenc}    
\usepackage{hyperref}       
\usepackage{graphicx}
\usepackage{url}            
\usepackage{booktabs}       
\usepackage{amsfonts}       
\usepackage{nicefrac}       
\usepackage{microtype}      
\usepackage{xcolor}         
\usepackage{amsmath,amssymb,amsthm}
\usepackage{xcolor}
\usepackage{tcolorbox}
\usepackage{subcaption}
\usepackage{thmtools}
\usepackage{thm-restate}

\usepackage{bm}

\PassOptionsToPackage{capitalize,nameinlink}{cleveref}
\usepackage{cleveref}

\newtheorem{theorem}{Theorem}[section]
\newtheorem{lemma}[theorem]{Lemma}

\newtheorem{remark}{Remark}[section]

\newcommand{\R}{\mathbb{R}}

\newcommand{\vertiii}[1]{{\left\vert\kern-0.25ex\left\vert\kern-0.25ex\left\vert #1 
    \right\vert\kern-0.25ex\right\vert\kern-0.25ex\right\vert}}

\newcommand{\beq}{\begin{eqnarray*}}
\newcommand{\eeq}{\end{eqnarray*}}
\newcommand{\beqn}{\begin{eqnarray}}
\newcommand{\eeqn}{\end{eqnarray}}

\usepackage{ifmtarg}
\usepackage{xifthen}%
\newcommand{\ent}[1][]{%
\ifthenelse{\isempty{#1}}{%
\mathrm{H}
}{
\mathrm{H}^{(#1)}
}}

\newcommand{\loch}[1][]{%
\ifthenelse{\isempty{#1}}{%
\mathrm{h}
}{
\mathrm{h}^{(#1)}
}}

\newcommand{\bb}{\mathbf{b}}
\newcommand{\bu}{\mathbf{u}}
\newcommand{\bv}{\mathbf{v}}

\newcommand{\inner}[1]{\left\langle#1\right\rangle}

\DeclareUnicodeCharacter{221E}{\ensuremath{\infty}}

\newcommand{\norm}[1]{\left\lVert#1\right\rVert}

\newcommand{\naturals}{\mathbb{N}}

\newcommand{\pr}{\text{Pr}}

\title{Geometric Factual Recall in Transformers}

\author{
Shauli Ravfogel\textsuperscript{*}$^{1}$\quad 
Gilad Yehudai\textsuperscript{*}$^{1}$\quad
Joan Bruna$^{1}$\quad
Alberto Bietti$^{2}$\\
$^{1}$New York University\quad 
$^{2}$Flatiron Institute
}

\begin{document}
\maketitle

\footnotetext[1]{\textsuperscript{*}Equal contribution.}

\begin{abstract}
How do transformer language models memorize factual associations? A common view casts internal weight matrices as associative memories over pairs of embeddings, requiring parameter counts that scale linearly with the number of facts. We develop a theoretical and empirical account of an alternative, \emph{geometric} form of memorization in which learned embeddings encode relational structure directly, and the MLP plays a qualitatively different role. In a controlled setting where a single-layer transformer must memorize random bijections from subjects to a shared attribute set, we prove that a logarithmic embedding dimension suffices: subject embeddings encode \emph{linear superpositions} of their associated attribute vectors, and a small MLP acts as a relation-conditioned selector that extracts the relevant attribute via ReLU gating, and not as an associative key-value mapping. We extend these results to the multi-hop setting---chains of relational queries such as ``Who is the mother of the wife of $x$?''---providing constructions with and without chain-of-thought that exhibit a provable capacity--depth tradeoff, complemented by a matching information-theoretic lower bound. Empirically, gradient descent discovers solutions with precisely the predicted structure. Once trained, the MLP transfers zero-shot to entirely new bijections when subject embeddings are appropriately re-initialized, revealing that it has learned a generic selection mechanism rather than memorized any particular set of facts.
\end{abstract}

\section{Introduction}

A defining capability of large language models (LLMs) is their ability to store vast collections of facts and retrieve them on demand \citep{petroni2019language,roberts2020much}. Facts expressed in natural language have rich structure: the same entity participates in many relations (e.g., \textit{birth place}, \textit{occupation}), and models are routinely asked to \emph{compose} facts—for example, answering questions about the mother of the author of a given book. This raises two fundamental questions: \emph{how do transformer language models memorize factual associations, and what latent representations support factual recall?}

Existing theoretical accounts emphasize an \emph{algebraic} form of memorization in which MLPs or attention matrices store pairs of input-output embeddings which are themselves unstructured, e.g., random nearly-orthogonal vectors \citep{bietti2023birth,nichani2024understanding}. Recent empirical work suggests a different mechanism: transformers with \emph{learned} embeddings memorize in a \emph{geometric} manner, developing representations whose structure encodes the topology of the stored relational information to be recalled \citep{noroozizadeh2025deep}. This leads to learned embeddings that are structured, with related entities leading to correlated embeddings. Phenomena such as the linearity of relation encoding \citep{hernandez2023linearity, merullolinear} further hint at structured, low-dimensional geometric solutions.
The goal of this work is to develop a theoretical understanding of this geometric form of memorization, and to verify that the predicted structures emerge in practice.

We focus on the multi-relation setting, in which a set of $N$ \emph{subjects} 
(e.g., people's names) are each associated with $R$ \emph{factual 
relations}—such as birth place, occupation, or native language—and each 
(subject, relation) pair maps to a single \emph{attribute} (e.g., London, 
physicist, English). A transformer LM is trained to predict the next token 
in sequences expressing these relations, or compositions of them. At a high level, our results show that learnable embeddings allow a 
transformer to memorize $N$ subjects across $R$ relations using an embedding 
dimension and a number of attention/MLP parameters that is only \emph{logarithmic} in $N$, by encoding relational 
structure directly into the geometry of the embedding space. The mechanism 
is qualitatively different from associative-memory accounts based on random, 
near-orthogonal embeddings: subjects are encoded as \emph{superpositions} of 
their $R$ associated attribute vectors, and the MLP acts as a 
\emph{relation-conditioned selector} that extracts the queried attribute via 
ReLU gating. We further show that this geometric mechanism extends to 
multi-hop relational queries—\emph{``Who is the mother of the wife of $x$?''}—
and that chain-of-thought (CoT) provably circumvents an otherwise unavoidable capacity bottleneck. 

A central question is whether these constructions are merely existence proofs or whether gradient descent discovers solutions with the same structure. We address this through extensive experiments on synthetic data in the shared-attribute setting (\cref{sec:experiments}). Across a variety of embedding dimensions and numbers of relations, we find that: (i)~the capacity threshold $d = \Theta(R \log N)$ predicted by theory is borne out empirically; (ii)~learned subject embeddings are well-approximated as linear superpositions of per-relation attribute vectors; (iii)~the MLP implements a relation-specific selector, as confirmed by causal intervention experiments; and (iv)~the learned MLP transfers to entirely new bijections without retraining, demonstrating that it has learned a generic selection mechanism rather than memorizing specific facts. These findings support the view that factual recall in transformers can be implemented via structured geometric representations rather than brute-force memorization.
We note that while we focus on encoding the geometric structure in the embeddings themselves, in real models this structure might emerge at intermediate layers, particularly for entities defined over multiple tokens. Such encoding can still be helpful for efficient knowledge manipulation at later layers (i.e., only a simple selection mechanism is needed on top). We perform preliminary experiments on pretrained LLMs in \cref{sec:real-lm-probing}.

\paragraph{Our contributions:}
\begin{itemize}
    \item \textbf{Logarithmic memorization with shared attributes 
    (\cref{thm:shared attr}).} A single-layer transformer with a small 
    MLP memorizes all $NR$ facts in dimension $d = O(R \log N)$, the regime 
    of practical interest being $R \ll N$. The construction encodes each 
    subject as a linear superposition of its $R$ attribute vectors, decoded 
    by a relation-conditioned ReLU gate in the MLP.
    

    \item \textbf{Capacity--depth tradeoff for multi-hop recall 
    (\cref{thm:k-hop-non-cot,thm:k-hop-cot}).} Without CoT, a transformer 
    solving $k$-hop queries must pay either in MLP width ($\Omega(NR)$) or in 
    embedding dimension ($\tilde \Omega(R^k)$). With CoT, a single-layer 
    transformer with $d = \tilde O(R + k)$ suffices for any $k$. A counting argument over $R$-regular 
    edge-colored graphs (\cref{thm:k-hop lower bound}) shows that the no-CoT tradeoff above is 
    near-optimal.

    \item \textbf{Empirical verification (\cref{sec:experiments}).} The 
    predicted threshold $d = \Theta(R \log N)$ emerges under gradient descent. Linear readouts, causal 
    interventions, and a freeze-and-swap transfer experiment confirm both 
    the superposition structure of subject embeddings and the generic, 
    relation-conditioned selector role of the MLP.
\end{itemize}

\section{Preliminaries and Problem Setup}\label{sec:setup}
We start by describing the toy setting for modeling multi-hop factual recall.

\subsection{Task Formulation: Multi-Hop Factual Recall}
For $N\in \mathbb{N}$ we define $[N] = \{1,\dots,N\}$.
Let $[N]$ denote a set of subjects and $[R]$ a set of relations, for $N, R \in \mathbb{N}$. We associate each relation $r \in [R]$ with a ground-truth bijection $g_r: [N] \to [N]$. For a given sequence depth $k \ge 1$, we train an autoregressive language model over the vocabulary $[N] \cup [R]$ on sequences of the form $x = (s_0, r_1, r_2, \dots, r_k, y) \in [N] \times [R]^k \times [N]~$,
where the final token is determined by the composition $y = (g_{r_k} \circ \cdots \circ g_{r_1})(s_0) \in [N]~$.
The model parameterizes a next-token distribution $p_\theta(x_{t+1} \mid x_{\le t})$ and is trained with the standard causal language modeling objective $\mathcal{L}(\theta) = -\mathbb{E}_{x}\!\left[\sum_{t=0}^{k} \log p_\theta(x_{t+1} \mid x_{\le t})\right]~$.
At inference, we evaluate the model's prediction of the final token conditioned on the preceding prefix, $\hat{y} = \arg\max_{v \in [N]} p_\theta(v \mid s_0, r_1, \dots, r_k)$. 
The single-hop regime ($k=1$), where sequences take the form $(s_0, r_1, y)$, constitutes a strictly harder generalization of the factual recall task formalized by \cite{nichani2024understanding}. Indeed, their theoretical framework assumes mutually disjoint attribute codomains (i.e., $g_r: [N] \to Y_r$ where $Y_i \cap Y_j = \emptyset$ for $i \neq j$). In contrast, our definition enforces a shared codomain $[N]$, inducing a non-trivial routing bottleneck due to output collisions.
For completeness, \Cref{appen:disjoint_attr} details a learned-embedding construction for the disjoint-attribute setting. By explicitly structuring the representations, this construction achieves a significant reduction in the requisite MLP parameter capacity compared to the bounds in \cite{nichani2024understanding}, dropping from $\mathcal{O}(N)$ to $\mathcal{O}(\log N)$. The remainder of this work considers only the generalized, shared-attribute setting.

\subsection{Architecture}

We consider a transformer architecture taking as input a sequence of $n$ tokens with token embeddings $x_1, \ldots, x_n \in \mathbb{R}^d$. We denote by $X^{(0)} \in \mathbb{R}^{d \times n}$ the matrix where the $i$-th column is the embedding of the $i$-th token.
Each layer $\ell \in [L]$ applies a multi-head self-attention mechanism followed by a position-wise feed-forward network (MLP). We denote the input to layer $\ell$ by $X^{(\ell-1)}$. The self-attention mechanism at layer $\ell$ with $H$ heads is parameterized by matrices $W_Q^{(h, \ell)}, W_K^{(h, \ell)} \in \mathbb{R}^{d \times d}$ and $W_V^{(h, \ell)} \in \mathbb{R}^{d \times d}$ for each head $h \in [H]$. 
We first define the attention scores matrix $A^{(h, \ell)} \in \mathbb{R}^{n \times n}$ for head $h$ as:
\begin{equation}
    A^{(h, \ell)} = \text{softmax}\left( (W_K^{(h, \ell)} X^{(\ell-1)})^\top (W_Q^{(h, \ell)} X^{(\ell-1)}) \right)
\end{equation}
The output of the multi-head attention mechanism, $Z^{(\ell)} \in \mathbb{R}^{d \times n}$, is then computed by aggregating the value heads:
\begin{equation}
    Z^{(\ell)} = \sum_{h=1}^H W_{V}^{(h, \ell)} X^{(\ell-1)} (A^{(h, \ell)})^\top
\end{equation}
This is followed by a residual connection, such that the intermediate representation is $\tilde{X}^{(\ell)} = Z^{(\ell)} + X^{(\ell-1)}$.

Finally, we apply an MLP, denoted by $\text{MLP}^{(\ell)}: \mathbb{R}^d \to \mathbb{R}^d$, which operates on each token (column) independently and has a hidden dimension of $m$, which may differ from $d$. The MLP consists of two linear transformations with a ReLU activation. In the theoretical constructions we sometimes use a $2$-hidden layer MLP instead of $1$, and state this explicitly. The output of the layer is $X^{(\ell)} = \tilde{X}^{(\ell)} + \text{MLP}^{(\ell)}(\tilde{X}^{(\ell)})$.
Our theoretical analysis does not explicitly include normalization layers, although this does not limit the generality of our results, as it is possible to implement degenerate normalization layers that act as the identity function (see, e.g., \citet{sanford2024representational}). In contrast, our empirical evaluations utilize standard transformer layers that include normalization.

\section{Related Work}\label{sec:related}

We overview core related work and defer additional references to \cref{app:extended-related}.

A line of work localizes factual recall in pretrained transformers to specific components, reading FFN layers as key--value memories \citep{geva2021transformer}, identifying mid-layer MLPs as the dominant store of subject--attribute associations via causal interventions \citep{meng2022locating}, and tracing recall as attention transporting the subject and MLPs performing the lookup \citep{geva2023dissecting}. A parallel editing literature operationalizes this view through targeted weight updates \citep{dai2022knowledge,meng2022locating,meng2022memit}. 

\citet{bietti2023birth} introduce an associative-memory view in which transformer weights are outer products over near-orthogonal embedding pairs, with roots in the neural computation literature~\citep{hopfield1982neural,krotov2016dense,ramsauer2021hopfield}. \citet{cabannesscaling} further studies the capacity of such associative memory matrices in a simple setting, and illustrates benefits of learned embeddings, though they only consider basic pairwise associations with no relations. Closest to our setting, \citet{nichani2024understanding} prove that a one-layer transformer with \emph{fixed random spherical} embeddings stores all $NR$ subject--relation--attribute associations using $\tilde\Omega(NR)$ attention or MLP parameters. Two questions are left open: whether \emph{learnable} embeddings can reduce $d$ to be merely logarithmic, without inflating the MLP, and what the resulting solution looks like. \cref{thm:disjoint-attr,thm:shared attr} answer both: learnable embeddings reduce $d$ to logarithmic, with a structure (superposition rather than near-orthogonal random codes) qualitatively different from the random-embedding regime. 

A long line of work establishes worst-case memorization bounds for neural networks \citep{vardi2022optimal,egosi2025logarithmic} and transformers \citep{yun2020transformers,kim2023provable,kajitsuka2024transformers}, but these depend on the number of examples and sequence length and do not exploit relational structure. Closest to us, \citet{dugan2025constructing} give an encoder--decoder MLP construction that stores a fact set $f:[K]\to[V]$ in $\Theta(K \log V)$ parameters, matching the information-theoretic lower bound for well-conditioned value embeddings. Their setting is single-relation: keys and values are given, and the construction produces MLP weights that map between them. Our setup is different in two ways. First, we fix a minimal MLP and instead \emph{construct the subject embeddings}, encoding each subject as a superposition of its $R$ attribute codes that the MLP decodes via ReLU gating. Second, the multi-relation structure is central to our analysis: treating our $NR$ subject--relation pairs as a single fact set in their framework would cost $\Theta(NR \log N)$ parameters.

Finally, \cref{thm:k-hop-non-cot} and \cref{thm:k-hop-cot} exhibits a capacity--depth tradeoff: with CoT, $d = O(R \log N)$ suffices for any number of hops; without CoT, either $d$ or MLP width must blow up exponentially or linearly in $k$. This parallels circuit-complexity work showing that bounded-depth transformers cannot solve certain tasks without super-polynomial size but constant-size autoregressive transformers can with CoT \citep{feng2023revealing,merrill2024expressive,li2024chain}. Our setting is narrower (multi-hop relational queries) but quantitatively sharper, with explicit constructions in $k$ and $R$ plus a matching information-theoretic lower bound on the no-CoT parameters. In practice, prior work has shown that LLMs struggle with multi-hop questions and that, when they succeed, they may rely on both latent compositional mechanisms and more direct input–output mappings, without an explicit representation of the intermediate answer \citep{yang2024large, biran2024hopping, khandelwal2025language}. These latent compositional mechanisms are studied as computations unfolding across model layers in a single forward pass, rather than as explicit chain-of-thought reasoning. Recently, \citet{gekhman2026thinking} have empirically observed that CoT helps factual recall. Our construction provides one theoretical explanation for the phenomenon.

\section{Factual Recall with Learned Representations}\label{sec:fact_recall_theory}

In this section, we analyze the capacity of transformers to solve factual recall problems when embeddings are treated as learnable parameters. We demonstrate that this flexibility permits embeddings to be structured geometrically, allowing cheap knowledge manipulation with small non-linear models on top of these, instead of expensive lookup based on large MLPs with random embeddings, which is common in previous works \citep{nichani2024understanding, dugan2025constructing}. We begin with a warm-up of the single-hop case, establishing that a logarithmic embedding dimension suffices via a selection mechanism in a ``small'' MLP. We then transition to the $k$-hop setting. In the standard regime of direct prediction, we prove an information-theoretic lower bound on the parameter-dimension trade-off, accompanied by explicit non-CoT constructions demonstrating that this bound is strictly tight. To understand how modern language models overcome such limitations for extended reasoning traces, we naturally introduce Chain-of-Thought (CoT) generation into our setting. Surprisingly, we show that allowing the model to emit intermediate relational steps completely breaks this theoretical bottleneck. CoT bypasses the rigid lower bounds of the direct $k$-hop setting, gracefully reducing the multi-hop capacity requirements back to the complexity of the single-hop case.

\subsection{Warm-Up: Single-Hop Factual Recall}

To build intuition, we first demonstrate that a logarithmic embedding dimension suffices for the standard $1$-hop task by utilizing the MLP as a generic selection mechanism rather than a lookup table. 

\begin{theorem}\label{thm:shared attr}
    There is a $1$-layer transformer with a $3$-layer MLP of width~$R$ and embedding dimension $d=4R\log(N) + 1$ that correctly solves the single-hop factual recall problem.
    This can also be achieved without an MLP, by using multi-head attention with~$R$ attention heads, with head dimension~$d_h = 4 \log(N)$ and slightly larger embedding dimension~$d = 4R \log(N) + 4 \log(R) + 1$.
\end{theorem}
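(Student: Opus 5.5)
We construct embeddings explicitly, exploiting the fact that both embeddings and output (unembedding) vectors are free parameters. Each attribute $y\in[N]$ gets a code $c_y\in\{\pm1\}^{b}$ with $b=\lceil\log_2 N\rceil$: the binary expansion of $y$, written in $\pm1$ form. We split $\R^d$ into $R$ blocks of size roughly $4\log N$, one per relation, plus one extra coordinate reserved for relation tokens and a bias. Each block is large enough to hold a code together with its negation and some positive/negative-part padding. Subject $s$ is embedded as the superposition
\[
x_s=\sum_{r=1}^R P_r\,\phi(c_{g_r(s)}),
\]
where $P_r$ places a vector into block $r$. We take $\phi(c)=(c^+,c^-,\dots)$ with nonnegative entries, so that ReLU gating can act on it; here $c^+$ and $c^-$ are the positive and negative parts. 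Relation token $r$ is embedded as a vector $e_r$ in a region disjoint from the subject blocks. If we allot that region only the single extra coordinate, we can encode $r$ by a value, say $r$ itself.

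For the attention layer we take a single head whose query at the last position (token $r$) attends uniformly, or with a hard-coded preference, to the subject position, e.g.\ by using constant scores. The value matrix copies the subject embedding into the residual stream of the relation token. The residual at the last position is then, up to a known scaling such as $1/2$, equal to $x_s+e_r$. Causal masking leaves only two tokens, so uniform attention suffices.

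The MLP serves as the selector. Its first hidden layer of width $R$ computes the gates $\mathbf 1[\text{relation}=r]$ from the relation coordinate, e.g.\ via $\mathrm{ReLU}(1-|t-r|)$ built out of ReLUs. This is why an extra layer is needed, giving a 3-layer MLP. The subsequent layers compute, for each block $r$, $\mathrm{ReLU}(P_r^\top x - M(1-\text{gate}_r))$ for a large constant $M$, and sum the results into a common output block. This zeroes every block except the queried one. Because the codes are stored as nonnegative parts, the gating is exact.

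The output then contains $c_{g_r(s)}$ and nothing else in the readout coordinates. With unembedding $u_y=c_y$, the logit $\langle c_y,c_{g_r(s)}\rangle$ is uniquely maximized at $y=g_r(s)$, equal to $b$ versus at most $b-2$. The main obstacle is bookkeeping: fitting the gating computation inside width exactly $R$ and dimension $4R\log N+1$. We must ensure the residual stream and the relation/subject residual terms do not leak into the readout. We can handle this by using output coordinates that the embeddings leave at zero, or by choosing the readout to cancel the known residual, which is possible since the $x_s$ and $e_r$ are fixed.

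For the attention-only variant we use $R$ heads of dimension $4\log N$. Head $r$'s value matrix projects block $r$ of the subject into a common output block. The selection must then come from attention: head $r$ should attend to the subject only when the query is relation $r$, and otherwise to the relation token itself, whose value is zero. We achieve this by giving relation tokens $\log R$-dimensional $\pm1$ codes $q_r$, which account for the extra $4\log R$ coordinates, and by using head-$r$ keys/queries with large inner-product gaps. The score is high on the subject exactly when the query code matches $q_r$, and the resulting softmax errors are $e^{-\Omega(\text{scale})}$. Taking the scale large enough keeps the leakage below the unit logit margin, and we decode as before.
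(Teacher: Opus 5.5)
Your proposal is essentially the paper's proof: subject embeddings that stack the $R$ attribute codes in per-relation blocks, uniform attention copying the subject onto the relation position, and an MLP that zeroes every block except the queried one through a relation-dependent negative bias inside a ReLU. The paper uses the two-sided form $[\bv-f_\ell(j)\mathbf{1}]_+-[-\bv-f_\ell(j)\mathbf{1}]_+$ with $f_\ell(j)=2|j-\ell|$ rather than your positive/negative-part split, and its attention-only variant is the same as yours: $R$ heads that attend to the subject only on a matching relation code and otherwise to the relation token, whose value is zero.

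The differences are minor and, if anything, in your favor. Your exact binary codes (margin $b$ versus at most $b-2$) replace the paper's random near-orthogonal codes and give a guaranteed sign gap for the head scores, where the paper's $2\bu_j^\top\bu_{j'}-d_u$ is only approximately sign-correct under its $3\log R$ bound. You also explicitly handle the residual stream, which the paper glosses over; do this cancellation in the MLP (or in spare zero coordinates), since a fixed unembedding cannot cancel an $s$-dependent residual. Like the paper's, your MLP really uses $O(R\log N)$ hidden units rather than literal width $R$.
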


The full proof can be found in \Cref{appen:proof_shated_attr}. In the construction, rather than encoding facts in the attention or MLP weights, we stack the $R$ target attributes directly into the subject embedding. Specifically, each subject embedding takes the form of a block vector containing the representations of its $R$ possible answers, one for each relation.  The 3-layer MLP then acts as a relation-conditioned selector, extracting only the block corresponding to the relation from the query. This construction demonstrates that even in the shared-attribute regime, a much weaker assumption than in  \citet{nichani2024understanding}, a logarithmic embedding dimension $d = O(\log N)$ and MLP width independent of $N$ remains sufficient for factual recall of $N$ facts. While the total parameter count is similar, this construction uses the already-existing lookup embeddings  instead of computational weights. This reliance on lookup parameters can increase the computational efficiency of the model, something which has recently motivated new architectures with embeddings at each layer \citep{cheng2026conditional}.

\subsection{The Information Bottleneck for multi-hop factual recall}

Generalizing $1$-hop factual recall to the $k$-hop regime exposes a fundamental information-theoretic bottleneck, inducing a strict capacity trade-off between the transformer's parameter count and its embedding dimension. This is highlighted in the following theorem:

\begin{theorem}\label{thm:k-hop lower bound}
Let $R \ge 2$ and $k \ge 2$. Let $p = \lfloor \frac{N}{2 R^{2k}} \rfloor$. If a transformer $\mathcal{T}$ solves the $k$-hop task with zero error without Chain-of-Thought, its internal weights $W$ must satisfy:
\[
W \ge \max \Big\{ \mathcal{B}_{global}, \; \mathcal{B}_{local} \Big\} - R \cdot D 
\]
where the global and local capacity constraints are defined as:
\begin{align*}
\mathcal{B}_{global} &= (R-1)N \log_2\left(\frac{N}{e}\right) - N \cdot D ~,~
\mathcal{B}_{local} = p \Big[ R^k(\log_2 N - 1) - D \Big]~.
\end{align*}
Here $W$ is the number of bits in the internal parameters of $\mathcal{T}$ (both self-attention and MLP), and $D$ is the number of bits in the embedding of $\mathcal{T}$.
\end{theorem}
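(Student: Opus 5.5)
We prove both bounds with a counting (encoding) argument. The transformer, its embeddings included, is a deterministic decoder of the ground-truth bijections; if many instances must be told apart, the total description length must be large. The key step is to notice that part of the required information can be held in the $D$ bits of embeddings, so only the remainder must lie in the $W$ bits of internal weights. Concretely, fix a family $\mathcal{F}$ of instances $(g_1,\dots,g_R)$. Any transformer that solves the $k$-hop task with zero error on an instance defines, through its $k$-hop outputs, a map from (weights, embeddings) to the full answer table $(s_0,r_1,\dots,r_k)\mapsto y$. If that table determines the instance inside $\mathcal{F}$, then $W + D' \ge \log_2|\mathcal{F}|$, where $D'$ counts every bit the model can exploit. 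The subtracted term $R\cdot D$ is read as the cost of the relation-token embeddings, and the $N\cdot D$ and $p\cdot D$ terms as the cost of the subject-token embeddings involved.

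\textbf{Global bound.} Let $\mathcal{F}$ be all tuples of bijections with $g_1=\mathrm{id}$ (or any normalization that removes $N\cdot D$ worth of relabeling freedom), so $|\mathcal{F}| = (N!)^{R-1} \ge (N/e)^{N(R-1)}$ by Stirling. The $k$-hop table determines every single $g_r$, because the query $(s_0,r,1,\dots,1)$ returns $g_r(s_0)$ when $g_1=\mathrm{id}$. Hence $W + N D + R D \ge (R-1)N\log_2(N/e)$, which is $W \ge \mathcal{B}_{global} - RD$. Here the $N$ subject embeddings and $R$ relation embeddings contribute at most $(N+R)D$ bits.

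\textbf{Local bound.} This part shows that without CoT the single forward pass must separate $R^k$ independent answers per subject. I will build a structured family. Partition $[N]$ into $p$ disjoint gadgets, each of size about $2R^{2k}$; this is where $p=\lfloor N/(2R^{2k})\rfloor$ comes from. Each gadget contains a root $s$ whose $R$-regular edge-colored tree of depth $k$ (via $g_{r_1},\dots$) has $R^k$ distinct leaves (words $r_1\dots r_k$), all of them different elements. The constructions fix the internal tree structure and leave free the assignment of the final-hop values. Specifically, they let the leaf for each word range over about $N/2$ choices, while keeping the bijections consistent by completing them arbitrarily on the unused half of $[N]$. This yields at least $(N/2)^{R^k}$ choices per gadget, i.e.\ $R^k(\log_2 N - 1)$ bits, and independent choices across gadgets. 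A $k$-hop query from the root reads off these $R^k$ leaves, and that output depends only on the root's embedding together with the shared weights and relation embeddings. The argument is therefore: the answers for the $p$ roots are determined by $W$ + the $p$ root embeddings + the $R$ relation embeddings, so $W + pD + RD \ge p R^k(\log_2N-1)$. The important subtlety is that the answers must not depend on embeddings of other subjects, which holds because a query contains only $s_0$ and relation tokens.

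\textbf{Main obstacle.} The hard step is making the local family rigorous. The trees must have $R^k$ distinct leaves with prescribed last-hop freedom while all $g_r$ stay bijections, and the free leaf choices across $p$ gadgets must genuinely multiply. I would handle this by first choosing all internal (depth $<k$) nodes of all gadgets disjointly, using at most $pR^{k}\cdot$const $\le N/2$ elements. Next I would set the last-hop edges from depth-$(k-1)$ nodes injectively into the remaining $\ge N/2$ elements. I would then count injective assignments, $\ge (N/2)^{pR^k}$ up to the lower-order slack absorbed by the choice of $p$, and extend each partial injection to a full bijection arbitrarily, with fixed completions so that distinct assignments give distinct instances.
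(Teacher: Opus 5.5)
Your proposal is correct and has the same skeleton as the paper's proof. Both are pigeonhole counts with exactly the paper's bit budgets, $2^{W+ND+RD}\ge (N!)^{R-1}$ and $2^{W+pD+RD}\ge (N/2)^{pR^k}$. The difference is in how the two families of instances are certified.

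For the global bound, the paper ranges over all of $(S_N)^R$ and bounds every fibre of the evaluation map by $N!$: it shows that equal $k$-hop tables force $h_r=g_r\circ C$ for a single permutation $C$. You instead restrict to the slice $g_1=\mathrm{id}$, which meets each such right-translation class exactly once, and you read off injectivity directly from the queries $(s_0,r,1,\dots,1)$. The count is the same and your route is more elementary. Your parenthetical about removing ``$N\cdot D$ worth'' of freedom is off, though: the normalization removes a factor $N!$, and $ND$ is simply the subject-embedding budget.

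For the local bound, the paper greedily picks $p$ roots inside a given graph. It discards forward and then backward $k$-neighbourhoods of size at most $R^{2k}$, which is where $p=\lfloor N/(2R^{2k})\rfloor$ comes from. It then reassigns the last-hop targets without checking that the extracted neighbourhoods are genuine trees or that the reassigned maps remain bijections. Your final plan makes both points explicit: disjoint internal trees, all $pR^k$ last-hop targets chosen injectively outside them, and a canonical completion of each partial injection.

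Your construction also never needs the $R^{2k}$ exclusion. The internal nodes number fewer than $pR^k$, so each injective choice has at least $N-2pR^k\ge N/2$ options whenever $p\le N/(4R^k)$. Running it with the theorem's $p$ (which satisfies this) gives the statement. Running it with $p\approx N/(4R^k)$ gives a strictly stronger local bound, of order $N\log N$ whenever $D\le R^k(\log_2 N-1)/2$ (for $N\gg R^k$). So drop the ``gadgets of size $2R^{2k}$'' narrative, which also clashes with letting leaves range over $N/2$ values.

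One caveat is shared with the paper. Your claim that the root answers depend only on $W$, the $p$ root embeddings and the relation embeddings assumes that the readout from the final hidden state to a token of $[N]$ is fixed or counted in $W$. If the unembedding is tied to the input embeddings, as in the decoding step of the paper's CoT construction, every root answer depends on all $N$ subject embeddings and the local count no longer goes through. State this assumption explicitly.
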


The lower bound shows a trade-off between $W$, the number of bits in the transformer attention and MLP layers, and $D$, the embedding dimension. This trade-off can be seen in three different regimes depending on the size of $D$:

\begin{enumerate}
    \item If $D < R$, then $\mathcal{B}_{global}$ dominates, yielding $W \ge \Omega(N \log N)$. The model must store $NR$ facts inside its weights, yielding a potentially very large transformer (e.g., a large key-value MLP memorizer). 
    \item If $D > R^k$, then both constraints $\mathcal{B}_{global}$ and $\mathcal{B}_{local}$ vanish. The embedding space is large enough to encode the entire $k$-hop evaluation tree of each subject, requiring minimal active computation from $W$. In this solution, the MLP can act as a selector, navigating the $k$-hop evaluation tree, but the embedding dimension is exponential in $k$.
    \item If $R < D < R^k$, then $\mathcal{B}_{local}$ dominates, and we have $\mathcal{B}_{local} \approx \frac{N}{R^k} - D$, bounding $W \gtrsim \frac{N}{R^k}$. 
\end{enumerate}

The formal proof is deferred to \Cref{appen:proof_k_hop_lower_bound}. The lower bound proceeds via a counting argument over the space of $R$-regular directed edge-colored graphs on $N$ vertices. This graph encodes the complete transition system; locally, the $k$-hop neighborhood of any subject unfolds into a complete $R$-ary tree of depth $k$, enumerating all valid relational paths. $\mathcal{B}_{global}$ bounds the global state space: the transformer's parameters must distinguish between all valid $k$-hop transition matrices. By factoring out constant right-translations of the graph, we apply the Pigeonhole Principle to the remaining $(N!)^{R-1}$ distinct evaluations. $\mathcal{B}_{local}$ bounds the local routing capacity: by greedily extracting $p$ vertex-disjoint $R$-ary subtrees of depth $k$, we isolate paths that the transformer must correctly route without interference, forcing a minimal parameter capacity if $D$ is small.

\begin{remark}
    The lower bound suggests a gap in the intermediate regime. If $D = R^a$ for some $a < k$, one intuitively expects $W$ to scale smoothly as $\frac{N}{R^a}$. However, our extraction of $R$-ary trees only bounds it at $\frac{N}{R^k}$. We leave tightening this middle regime to future work.
\end{remark}

\subsection{Multi-Hop Constructions without Chain-of-Thought}

To verify the tightness of the lower bounds, we provide two explicit constructions demonstrating the necessary exponential blowup in either embedding dimension or MLP width. We utilize $\tilde{O}(\cdot)$ notation to suppress logarithmic factors. 

\begin{theorem}\label{thm:k-hop-non-cot}
    There exists a depth-$k$ transformer that solves the $k$-hop factual recall problem on a set of subjects of size $N$ and a set of relations of size $R$ using one of the following architectures:
    \begin{enumerate}
        \item \textbf{Key-Value Memory:} Embedding dimension $d = \tilde{O}(k)$ and MLP width $O(N \cdot R)$. 
        \item \textbf{Embedding Pre-computation:} Embedding dimension $d = \tilde{O}(R^k)$ and MLP width $\tilde{O}(R^k)$. 
    \end{enumerate}
\end{theorem}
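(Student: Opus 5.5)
Both constructions use a depth-$k$ transformer in which layer $\ell$ performs hop $\ell$. Attention only routes information, and the MLP (or the precomputed embedding) does the factual lookup. We write every token embedding as a direct sum of blocks: a \emph{state} block holding a code $\phi(s)\in\{\pm1\}^{O(\log N)}$ of the current intermediate subject, a block holding a positional code, and a block holding the relation identity. Subjects get codes that are distinct sign vectors of length $\lceil\log_2 N\rceil$. Then $\langle \phi(s),\phi(s')\rangle$ equals $\log_2 N$ exactly when $s=s'$ and is at most $\log_2 N-2$ otherwise, so a ReLU threshold can detect any single subject.

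\textbf{Construction 1 (Key-Value Memory).} In layer $\ell$, the attention moves the relation $r_\ell$ onto the final position $k$, using a single head keyed on positional codes. Positions $0,\dots,k$ get $O(\log k)$-dimensional nearly orthogonal codes, and a large query--key scale makes the softmax effectively hard. The final position then holds the current subject code $\phi(s_{\ell-1})$ plus a one-hot or sign code $\psi(r_\ell)$ written into a scratch block. The layer-$\ell$ MLP has one hidden unit per pair $(s,r)$:
\[
h_{s,r}(x)=\mathrm{ReLU}\bigl(\langle \phi(s),x_{\text{state}}\rangle+\langle\psi(r),x_{\text{rel}}\rangle - c\bigr),
\]
with $c$ chosen so that the unit fires, with value $1$, only when both codes match. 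Its output weight is $\phi(g_r(s))-\phi(s)$ on the state block, which overwrites the state through the residual connection. The same MLP also subtracts the scratch relation code; this can be done with two extra units or by having attention write into a block that a linear map clears. The MLP therefore has width $NR$ plus $O(1)$, and $d=O(\log N+\log R+\log k)=\tilde O(k)$. After $k$ layers the state is $\phi(y)$, and we decode with unembeddings $\phi(v)$, for which $\phi(y)$ gives the strict argmax.

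\textbf{Construction 2 (Embedding Pre-computation).} We build on \cref{thm:shared attr}. Each subject embedding stores, for every relational path $(r_1,\dots,r_j)$ with $j\le k$, the block $\phi(g_{r_j}\circ\cdots\circ g_{r_1}(s))$. The full path tree has $\sum_{j\le k}R^j=O(R^k)$ nodes, so $d=\tilde O(R^k)$; storing only the leaves, $R^k$ blocks of size $O(\log N)$, also suffices. Layer $\ell$ attends to relation $r_\ell$ and keeps only the subtree indexed by paths beginning with $r_1,\dots,r_\ell$. It does this with ReLU gating as in \cref{thm:shared attr}: hidden units $\mathrm{ReLU}(x_b - M(1-\mathbb 1[r_\ell\text{ matches}]))$ zero out every block whose $\ell$-th path coordinate differs from $r_\ell$, and a second gating layer realises the subtraction on the residual stream. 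This needs width $O(R^k\log N)$. After $k$ layers exactly one leaf block survives. A fixed linear map that sums all leaf blocks into a single output block then yields $\phi(y)$, because the other blocks are zero.

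\textbf{Main obstacle.} The hardest step is making the selection exact despite the residual connections and softmax attention. Since the residual stream is additive, we must either cancel blocks explicitly (output weight $-1$ on discarded blocks) or arrange that later layers ignore them. We also need large enough scale on the attention logits, or a sufficiently large margin, so that the small leakage from soft attention does not flip a ReLU threshold. My first approach is to give relations sign codes with margin $\Omega(\log R)$, make the logit scale grow like $\log(kN)$, and bound the accumulated error by a geometric argument across the $k$ layers. The layers are constructed identically up to the index $\ell$, so induction on $\ell$ gives correctness.
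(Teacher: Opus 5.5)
\textbf{Gap: the subject's information never reaches the active position.} Both of your constructions are, in substance, the paper's. Construction~1 is an $NR$-unit key--value MLP whose hidden units are indexed by subject--relation pairs. Construction~2 stores an $R^k$-leaf answer tree in the subject embedding and prunes one branch per layer with ReLU gating. Your sign-code threshold $c$, the residual overwrite with output weight $\phi(g_r(s))-\phi(s)$, and your plan for bounding soft-attention leakage are sound; the paper simply treats attention as hard.

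The missing step is this. In Construction~1 the only thing attention ever moves onto position $k$ is $r_\ell$, yet you assert that position $k$ holds $\phi(s_{\ell-1})$. At $\ell=1$ this fails. $\phi(s_0)$ lives in the subject token at position $0$, while position $k$ is the relation token $r_k$, whose state block is zero. So no unit $h_{s,r}$ fires and the induction never starts. Construction~2 has the same hole: you never say which position carries the tree or how it meets $r_\ell$. If the tree stays at position $0$, the subject token must attend to later positions, which is non-causal. (The paper's own Construction~1 does exactly this, making token $0$ the active token.) If the tree is to end at position $k$, something must copy it there.

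\textbf{Fix.} The repair is cheap and keeps depth $k$. One option is a second head in layer $1$ that copies the state (respectively, tree) block from position $0$ to position $k$. The other, which the paper uses in its Construction~2, is to pass the state along the chain. At layer $\ell$, token $\ell$ attends to token $\ell-1$, copies its state or tree, and does the lookup or branch selection using its \emph{own} relation block. This version is causal, needs no scratch block, and leaves the answer at the final position. Tokens to the right of $\ell$ still carry a zero state at layer $\ell$, so they never fire, and tokens to the left are never read again.

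\textbf{Smaller point.} Clearing the scratch block needs more than two units in general, since a two-unit MLP outputs vectors in a fixed two-dimensional subspace. The simplest fix is to append $-\psi(r)$ to the output weights of $h_{s,r}$, because exactly one such unit fires at the active position.
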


The full proof can be found in \Cref{appen:proof_k_hop_lower_bound}; here, we provide a brief intuition for the proof.
For the \emph{Key-Value Construction}, the embedding dimension is constrained to $O(\log N)$, preventing the embeddings from holding the graph structure. The $k$-layer transformer maintains a single active token that extracts relations sequentially. At each layer, an $O(N \cdot R)$ wide MLP operates as a brute-force lookup table, matching the exact subject-relation pair to output the next subject. For the \emph{Embedding Pre-computation Construction}, expanding the embedding dimension to $\tilde{O}(R^k)$ allows encoding the entire $k$-hop tree of the initial subject directly into its embedding vector. The transformer evaluates the query by traversing this static tree; at each layer $l$, the attention head fetches the current relation $r_l$, and an $\tilde{O}(R^k)$ MLP acts as a Boolean selector to discard irrelevant branches, narrowing the tree until the final answer remains.

\subsection{Breaking the Lower Bound with Chain-of-Thought}

The fundamental limitation of the prior constructions is their reliance on a single forward pass. By incorporating CoT, we circumvent the lower bound of \Cref{thm:k-hop lower bound} and achieve a two-fold advantage: (1) The required network depth reduces from $k$ to $1$ via the autoregressive generation of intermediate outputs, and (2) The capacity requirement on both the model size and embedding dimension is bypassed entirely.

\begin{theorem}\label{thm:k-hop-cot}
    There exists a $1$-layer transformer with CoT, embedding dimension $d = \tilde{O}(R + k)$ and MLP width $\tilde{O}(R)$ that solves a $k$-hop factual recall problem.
\end{theorem}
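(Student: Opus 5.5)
The plan is to reduce the $k$-hop problem with CoT to $k$ successive applications of the single-hop construction of \Cref{thm:shared attr}. With CoT the model generates the intermediate subjects $s_1 = g_{r_1}(s_0), s_2 = g_{r_2}(s_1), \dots, s_k = y$ one at a time. At the step where the model has produced $s_{j-1}$ and must output $s_j$, it only needs two tokens: the most recently generated subject $s_{j-1}$ and the relation $r_j$. Every decoding step is then a single-hop query $(s_{j-1}, r_j)$. The subject embeddings are taken exactly as in \Cref{thm:shared attr}: each subject embedding is a block vector holding the $R$ codes (each of dimension $O(\log N)$) of its attributes $g_1(s),\dots,g_R(s)$. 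The MLP of width $O(R)$ is reused as the relation-conditioned ReLU selector. Since the output of the selector is the code of $s_j$, reading it out through the (tied) unembedding gives $\arg\max = s_j$, as in the single-hop case.

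The new ingredient is the attention layer, which must route the correct relation. The generated sequence is $(s_0, r_1, \dots, r_k, s_1, s_2, \dots)$, and at the position of $s_{j-1}$ attention must fetch $r_j$, i.e.\ the relation at position $j$. I will add positional/index information of dimension $O(k)$, or $O(\log k)$ with near-orthogonal codes, which is where the $\tilde O(R+k)$ comes from. Concretely, each relation token $r_j$ carries an index code $e_j$, and each subject token generated at CoT step $j$ carries a query code $e_{j+1}$. I need a hop counter here, and I would obtain it from absolute positional embeddings: the position of $s_{j-1}$ is $k+j$, so a fixed positional embedding $p_{k+j}$ can contain $e_j$ in a query slot. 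Since $k$ is fixed, this is legitimate. Then $W_K^\top W_Q$ matches query slot against key slot with a large scale, so softmax concentrates (up to exponentially small error) on $r_j$. The value matrix copies the relation's one-hot/code (dimension $O(R)$) into a dedicated relation slot. The subject's own embedding passes through the residual.

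After attention, the residual stream at the position of $s_{j-1}$ contains the stacked attribute block of $s_{j-1}$ plus the relation indicator for $r_j$ plus small attention leakage. The MLP gate from \Cref{thm:shared attr} then selects block $r_j$. I would follow the approach there of a large negative bias on non-selected blocks, with the relation indicator turning the bias off for exactly one block.

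The main obstacle is robustness to the softmax leakage and to the other tokens in context. Attention puts weight $1-\epsilon$ on $r_j$ and $\epsilon$ spread over other relations and subjects. Those subject tokens contribute their large attribute blocks into other slots. To control this I would make the value matrix zero on subject tokens (values only read the relation slot) and choose the attention scale $\gtrsim \log(k)$ times a margin so that $\epsilon$ is small enough that the ReLU gate thresholds still separate selected from unselected blocks. With this, the decoding margin of the single-hop construction survives, and the dimension count is $O(R\log N)$ for attribute blocks plus $O(R)$ for the relation slot plus $O(k)$ for indices, i.e.\ $\tilde O(R+k)$ treating $\log N$ as suppressed. The MLP width is $O(R)$.
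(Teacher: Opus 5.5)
Your plan matches the paper's: store the one-hop neighborhood $\text{hop}(s)$ in each subject embedding, route $r_j$ by positional attention, and select the queried block with a relation-conditioned ReLU gate. However, as specified it cannot produce the first intermediate subject. The prediction of $s_1$ is made at the last prompt position, and that position holds $r_k$, not a subject token. Your formula placing $s_{j-1}$ at position $k+j$ silently assumes a subject sits there when $j=1$. At that position the residual contains only $r_k$'s embedding, so ``the subject's own embedding passes through the residual'' is false. You also deliberately make the value matrix vanish on subject tokens, so attention cannot bring $s_0$'s attribute stack there either. The gate therefore acts on an all-zero stack and the first CoT token is wrong. The same failure already occurs for $k=1$, where the only decoding position is that of $r_1$.

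The paper closes this with an explicit initialization step. Its routing matrix contains the extra entry $e_0e_k^{\top}$, so at the position of $r_k$ the head fetches $s_0$, and a separate MLP block copies $\text{bin}(s_0)$ to the output. The model thus first emits a copy of $s_0$ as a scratchpad token, after which every step is uniform: a generated subject token fetches its relation. Alternatively, you could let the $r_k$ position attend equally to $s_0$ and $r_1$ with a value map that also copies stacks. In that case your leakage argument must be redone, because values no longer vanish on subject tokens.

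A smaller point concerns the readout. If a subject's embedding is only its stacked block, a tied unembedding scores the output against other subjects' stacks and picks the wrong token, e.g.\ $g_1^{-1}(s_j)$ if the code lands in the first block. You need one of two fixes. One is an untied output embedding, as with the attribute vectors $(\bv_y,\pmb{0})$ in \Cref{thm:shared attr}. The other is an identity-code slot per subject, like the paper's $\text{bin}(s)$. In either case the selected code should be written into a slot that the residual stack does not pollute.
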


\begin{table}[h]
    \centering
    \begin{tabular}{lccc}
        \toprule
        \textbf{Mode} & \textbf{Depth} & \textbf{Embedding Dim } & \textbf{MLP Width} \\
        \midrule
        No CoT (Key-Value memory, \Cref{thm:k-hop-non-cot} (1))  & $k$ & $\tilde{O}(k)$ & $O(N \cdot R)$ \\
        No CoT (Embedding, \Cref{thm:k-hop-non-cot} (2)) & $k$ & $\tilde{O}(R^k)$ & $\tilde{O}(R^k)$ \\
        A solution with CoT (\Cref{thm:k-hop-cot})& $1$ & $\tilde{O}(R)$ & $\tilde{O}(R)$ \\
        \bottomrule
    \end{tabular}
    \caption{Expressivity trade-offs for the $k$-hop factual recall problem.}
    \label{tab:cot-tradeoffs}
\end{table}

The full proof can be found in \Cref{appen:proof_k_hop_cot}. The main crux of the proof is that, with CoT, the model only needs the capacity to solve the $1$-hop problem locally. This produces, in essence, a similar construction to \Cref{thm:shared attr} of the $1$-hop warm-up case.
The subject embedding is initialized with an  $\tilde{O}(R)$ representation of its immediate 1-hop neighborhood. During generation, the attention mechanism is configured to route the target relation $r_l$ to the previously generated intermediate subject $s_{l-1}$. The $1$-layer MLP applies the same Boolean gating mechanism as the single-hop case to extract $s_l$. By materializing intermediate states into the sequence, the spatial complexity required to evaluate deep composition is bypassed entirely, matching the optimal $\tilde{O}(R)$ capacity threshold. Crucially, CoT circumvents the capacity lower bound of \Cref{thm:k-hop lower bound} by exploiting the output unembedding operation as a discrete memory fetch. Because autoregressive generation forces a projection back into the vocabulary space at each step $l$, the model dynamically re-queries the static embedding matrix. This allows the network to sequentially retrieve the $1$-hop topological neighborhood of $s_l$, bypassing the need to store the entire $k$-hop evaluation tree within a single continuous hidden state. The trade-offs between the different constructions are summarized in \Cref{tab:cot-tradeoffs}.

\section{Experiments}\label{sec:experiments}

A natural question is whether gradient descent discovers a solution with the same qualitative structure to our construction. We investigate this question through a systematic experimental study in the shared-attribute setting, which is the more challenging and practically relevant case.

\subsection{Setup}\label{sec:exp-setup}

We instantiate the shared-attribute setting with $N = 4096$ subjects (equivalently, attributes) and iterate over the number of relations $R \in \{2, 4, 8, 10, 12, 14, 16\}$ and the embedding dimension $d \in \{32, 64, 128, 256, 512, 768\}$. For each relation~$r$, a random bijection $g_r : [N] \to [N]$ maps subjects to attributes drawn from a shared $N$-attribute pool; the model is given $(s, r, g_r(s))$ and trained as an autoregressive LM. Accuracy is the fraction of all $N \times R$ subject--relation pairs the model classifies correctly. The model is a single-layer transformer with uniform attention and a two-layer GELU MLP with Pre-RMSNorm, using a single embedding pool for subjects and attributes (separate input and output projections).\footnote{We use uniform attention in the single-hop experiments due to the simple sequence structure, matching the theory. Experiments with learned attention yield qualitatively similar results.} We report mean$\,\pm\,$std across three seeds per cell. Full architecture and training hyperparameters are deferred to \cref{app:exp-details}. To isolate the role of \emph{learned} embeddings in the geometric memorization mechanism, we additionally run the entire sweep with the input entity embedding table frozen at its random initialization, training only the attention/MLP weights and the output projection. This is the regime studied by \citet{nichani2024understanding} and serves as a control that reveals which experimental signatures depend on having a learnable representation.

\textbf{Analysis.}
After training, we probe whether the learned solution matches the predicted structure through three complementary analyses (full methodology in \cref{app:exp-analyses}). First, for the \textbf{linear readout}, for each relation $r$, we fit a linear map $W_r$ via ridge regression such that $s_x W_r \approx a_{g_r(x)}$ on held-out subjects; the superposition prediction of \cref{thm:shared attr} implies that all $R$ attributes should be recoverable this way. Second, for \textbf{causal interventions}, we use the per-relation readouts $W_r$ to construct minimum-norm perturbations that swap the attribute for a queried relation, and measure both whether the MLP follows the swap on that relation and whether its predictions on the \emph{other} relations remain stable; the geometric mean of the two scores defines a \emph{selectivity} metric. Third, for \textbf{MLP-freeze transfer}, after training we freeze the model and sample new random bijections $g'_r$. Then, we define $W_{\text{stack}} \;=\; [\,W_1 \mid \cdots \mid W_R\,]$, and reinitialize subject embeddings via $s_x = \text{new target} \cdot W_{\text{stack}}^+$, finding the subject representations that linearly encode the \emph{new} attributes under $W_{\text{stack}}$; high zero-shot accuracy on $g'$ would confirm the MLP is a generic selector rather than a memorizer of specific input--output pairs.

\subsection{Results}\label{sec:exp-results}

\subsubsection{Convergence and Capacity}

The model with learned embeddings reaches perfect memorization for all values of $R$ provided that $d \geq 128$.
The frozen-embedding control also reaches perfect accuracy, but only at substantially higher~$d$ (e.g., $d \geq 512$ is required to memorize $R=16$ relations vs.\ $d \geq 128$ with learned embeddings) --- consistent with the parameter-count bound $\tilde\Omega(NR)$ of \citet{nichani2024understanding}, which the associative memory regime must satisfy without the help of structured embeddings. See \cref{fig:acc-heatmap} in the appendix for the full results.
To quantify the gap, we iterate $N \in \{2^{7}, \ldots, 2^{15}\}$ at fixed $R=8$ and, for each $N$, binary-search the smallest embedding dimension $d_{\min}(N)$ at which trained accuracy crosses $0.95$. We do this in two regimes: trainable embeddings, and frozen random embeddings. We either hold the MLP hidden width fixed at $64$, so any growth of $d_{\min}$ is forced through the embedding geometry rather than through the MLP, or scale the width as $4d$. The results are shown in \cref{fig:scaling}: the trainable curve scales as $d_{\min} \approx a + b \log_2 N$, matching the $\Theta(R \log N)$ rate of \cref{thm:shared attr}, while the frozen curve scales as a power law of $N$ with empirical exponent $\alpha \approx 1$ or $\alpha \approx \frac{1}{2}$ for fixed and variable-width MLP, respectively. This matches the $d_{\min} = \Theta(NR)$ prediction one obtains for fixed random embeddings.

\begin{figure}[t]
    \centering
    \begin{subfigure}{0.35\textwidth}
        \centering
        \includegraphics[width=\linewidth]{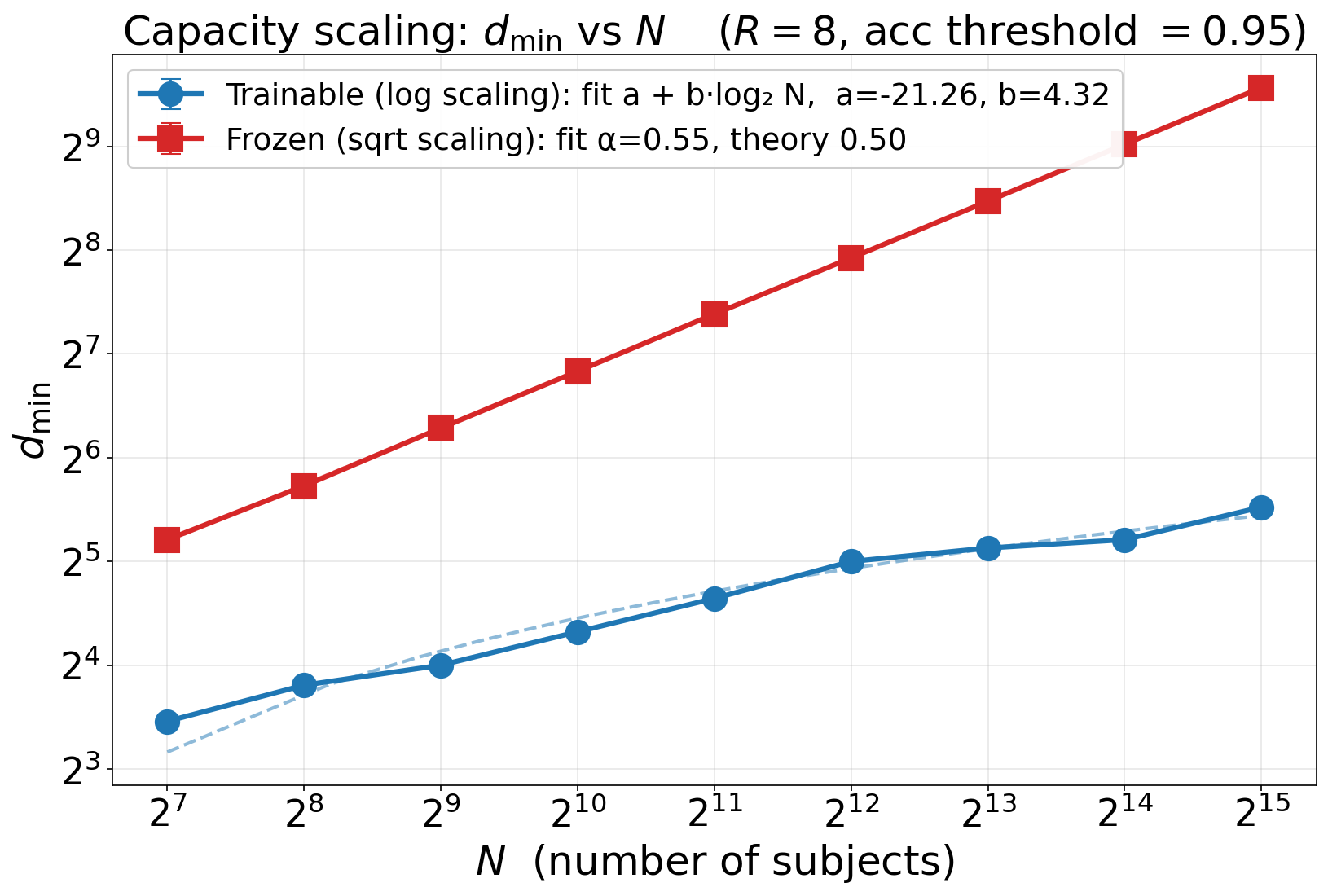}
        \caption{MLP width = $4d$}
    \end{subfigure}
    \hspace{0.02\textwidth}
    \begin{subfigure}{0.35\textwidth}
        \centering
        \includegraphics[width=\linewidth]{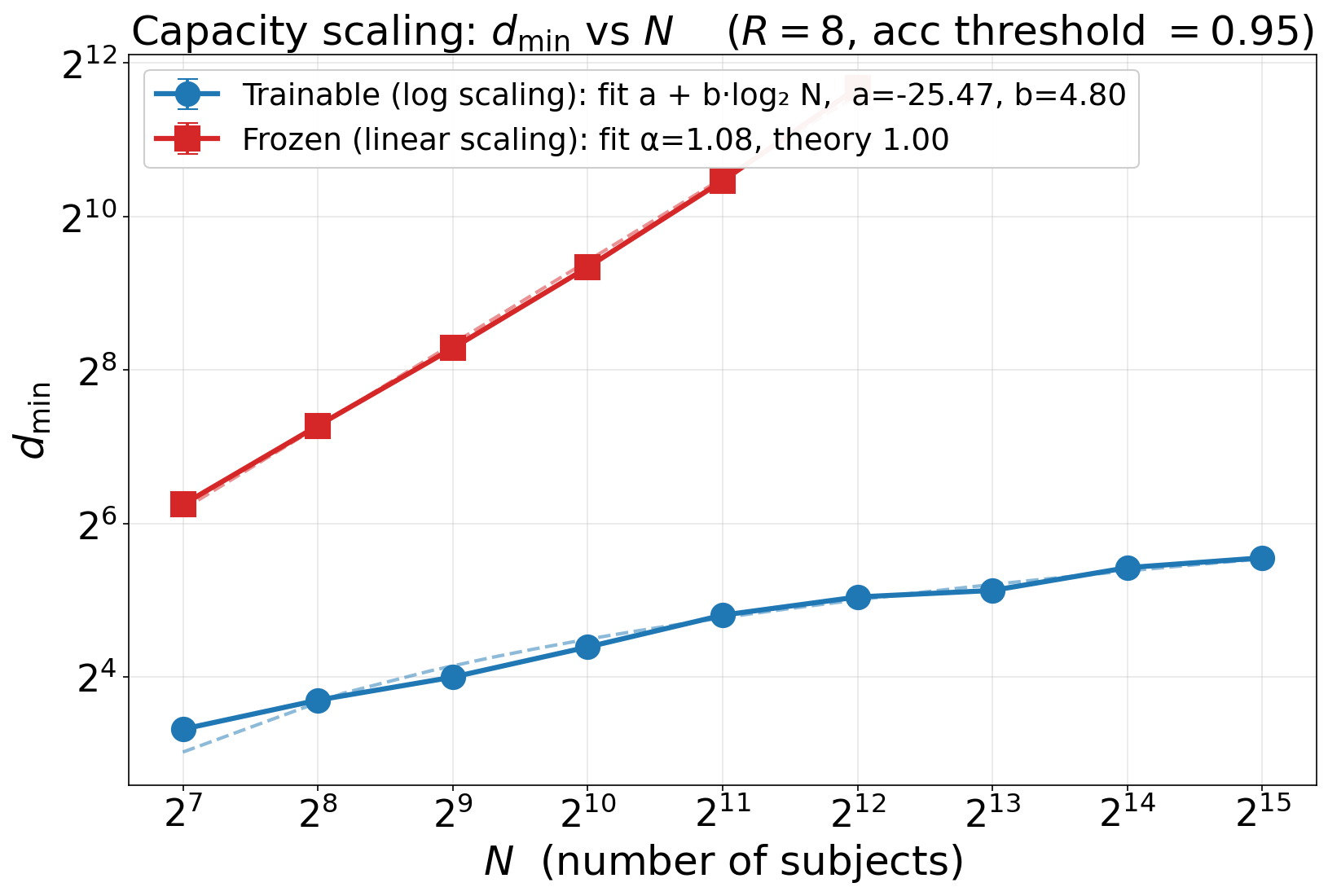}
        \caption{MLP width = 64}
    \end{subfigure}
    \caption{Scaling behavior of factual recall with learned or frozen embeddings.}
    \label{fig:scaling}
\end{figure}

\subsubsection{Linear Structure in the Embeddings}
We find that the embeddings develop linear structure that allows for efficient geometric factual recall.

\textbf{Superposition structure in embeddings.}
\cref{fig:readout-acc} shows the per-relation linear readout accuracy on subject input embeddings and hidden states. With trainable embeddings, the readout is near-perfect across all cells with sufficient capacity, demonstrating that each relation's attribute can be decoded from the subject embedding by a simple per-relation linear map --- exactly the superposition structure predicted by \cref{thm:shared attr}. With frozen embeddings (\cref{fig:frozen-readout}) the readout collapses to chance for the subject embeddings, and is more moderate for the hidden states: by construction, random fixed embeddings cannot encode relation-specific attribute information, so any memorization that occurs must be implemented entirely in the attention/MLP weights. 

\begin{figure}[t]
    \centering
    \includegraphics[width=0.65\textwidth]{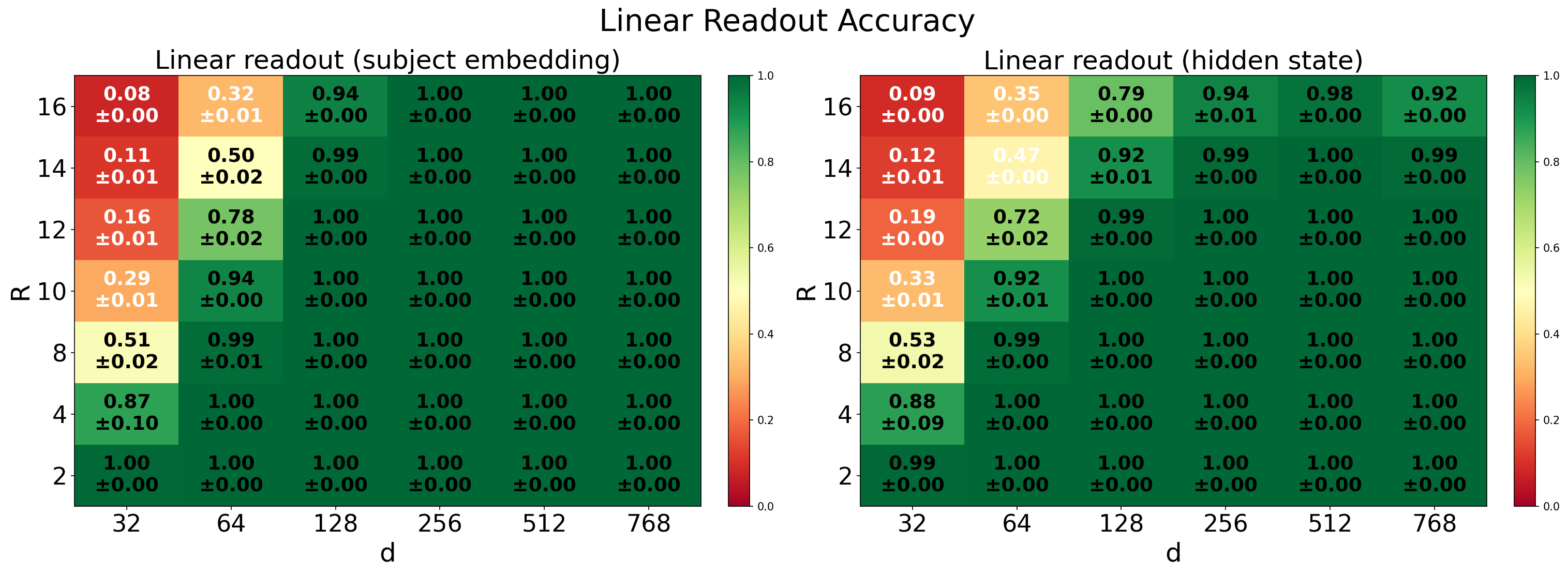}
    \caption{Per-relation linear readout accuracy from the subject embedding $s_x$, and hidden states, with learnable embeddings.}
    \label{fig:readout-acc}
\end{figure}

\textbf{MLP as a generic selector.}
\cref{fig:selectivity} shows the best-$k$ selectivity scores as a function of~$d$ for each~$R$. Across all configurations with sufficient capacity, selectivity is high: the MLP correctly follows counterfactual changes to the queried relation while leaving other relations unaffected. This confirms that the MLP implements a relation-specific selection mechanism, consistent with the ReLU gating construction in the proof of \cref{thm:shared attr}. Notably, the rank truncation matters: using the full pseudoinverse introduces noise from small singular values, while truncating to $k \approx d_{\min}$ yields the cleanest interventions, suggesting that the effective dimensionality of the per-relation readout subspace matches the theoretical prediction. 
The most striking evidence comes from the MLP freeze experiment (\cref{fig:selectivity-freeze}b). After replacing all bijections $g_r$ with fresh random bijections $g'_r$ and initializing subject embeddings to encode the new superposition via the smart initialization, the frozen MLP achieves substantial accuracy \emph{without any retraining}. This demonstrates that the MLP has learned a generic relation-conditioned selector: it extracts the attribute corresponding to the queried relation from whatever superposition is stored in the subject embedding, rather than memorizing specific input--output associations. 

\begin{figure}[h]
    \centering
    \begin{subfigure}[t]{0.35\textwidth}
        \centering
        \includegraphics[width=\linewidth]{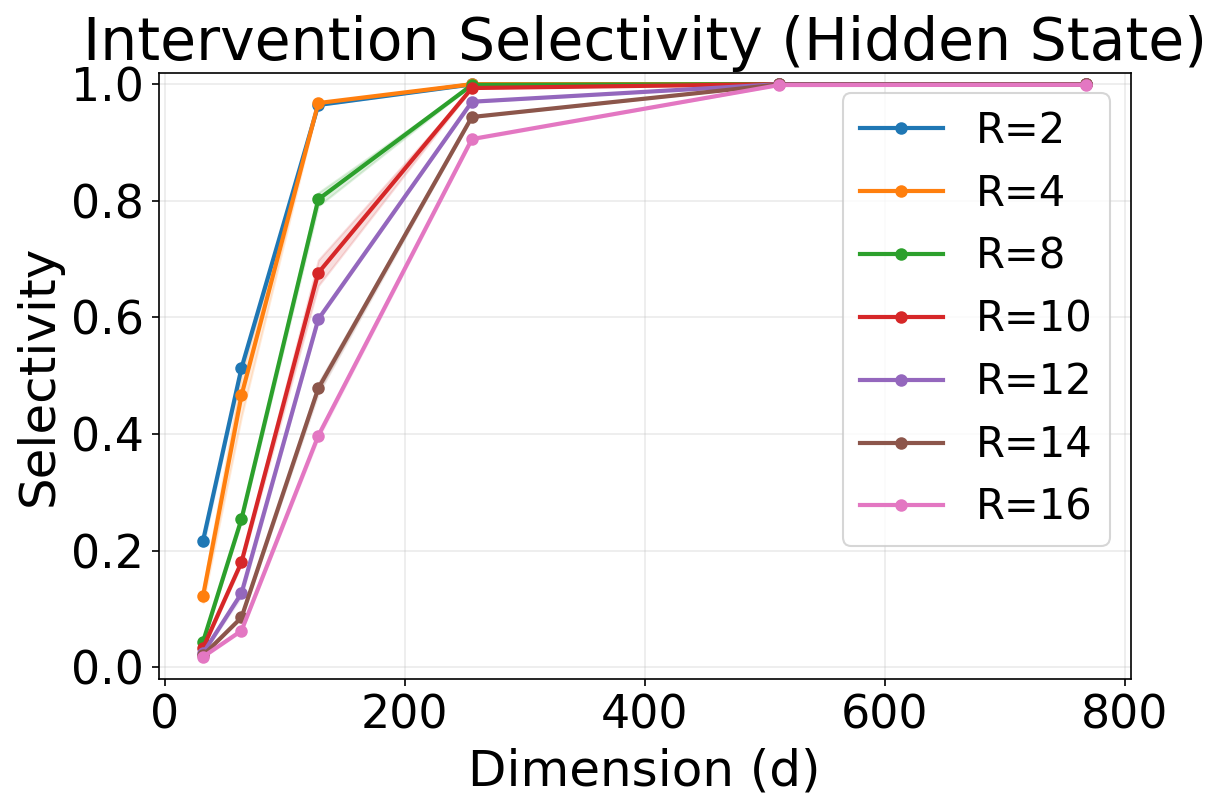}
        \caption{Intervention selectivity scores (on the hidden state) as a function of embedding dimension, for varying~$R$.}
        \label{fig:selectivity}
    \end{subfigure}
    \hspace{0.02\textwidth}
    \begin{subfigure}[t]{0.35\textwidth}
        \centering
        \includegraphics[width=\linewidth]{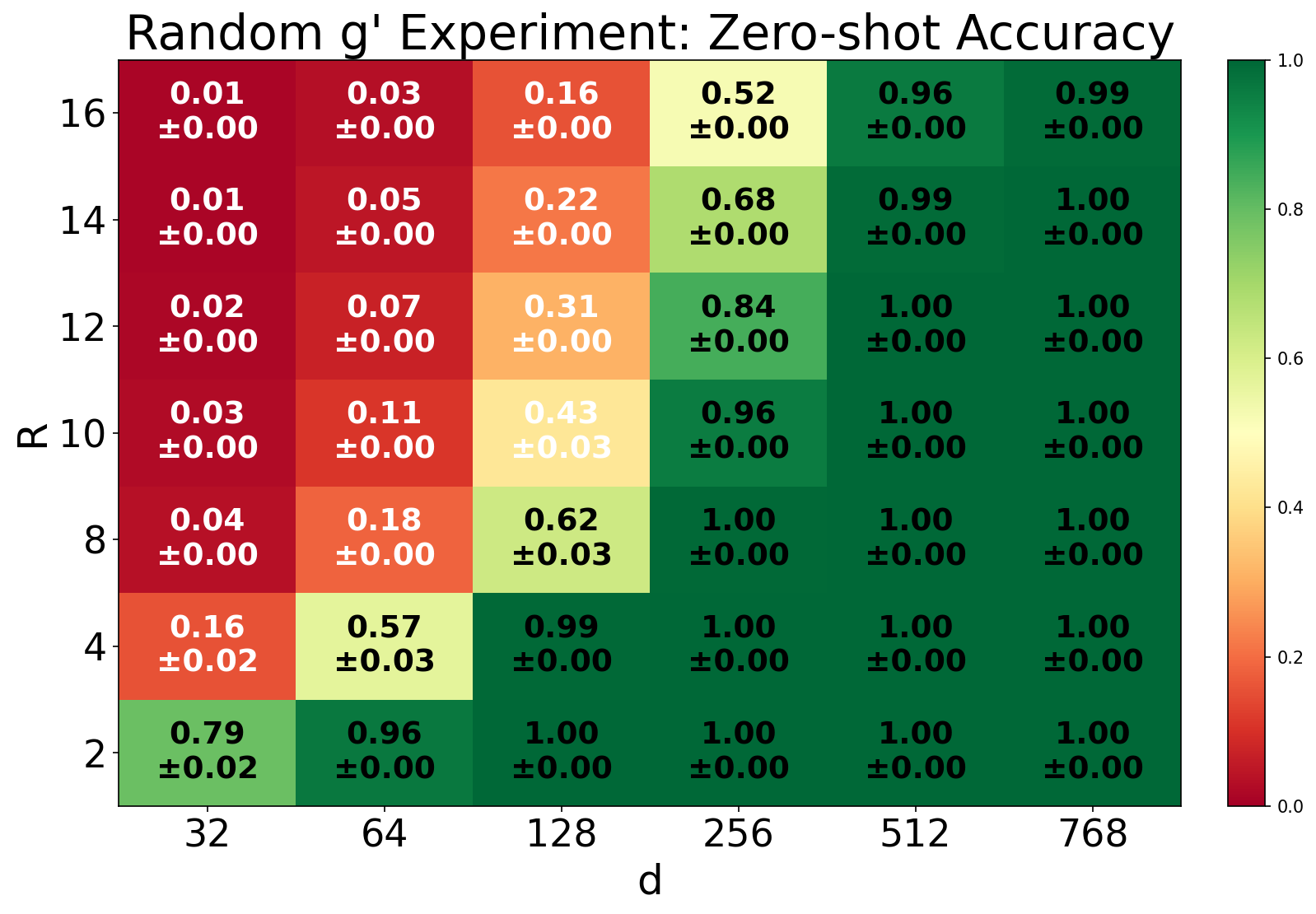}
        \caption{Zero-shot accuracy with smart initialization on new bijections~$g'$.}
        \label{fig:freeze-zeroshot}
    \end{subfigure}
    \caption{Intervention-based evidence for the generic selection mechanism learned by the MLP.}

    \label{fig:selectivity-freeze}
\end{figure}

\paragraph{Multi-hop experiments.} See \cref{app::exp-multihop} for experiments on the multi-hop setting. The findings  show that CoT and trainable embeddings indeed facilitate compositional factual recall.

\subsection{Probing Real Language Models for Linear Relational Encoding}\label{sec:real-lm-probing}

Our theory (\cref{sec:fact_recall_theory}) predicts that subject embeddings encode each relation's answer as a \emph{linearly decodable} component, thereby potentially providing the missing theoretical basis for the linear relational structure observed empirically by \citet{hernandez2023linearity} and \citet{merullolinear}. We test this on real pretrained LMs. Two caveats: (i) the theory targets a single-layer transformer, so we probe at \emph{every} layer rather than committing to one; (ii) entities span multiple BPE tokens, so following \citet{meng2022locating, geva2023dissecting} we read $\mathbf{s}_\ell$ at the \emph{last subject token}.

\textbf{Setup.}
For each relation $r$ in each category $c$ (gathered via a Claude-assisted pipeline; see \cref{app:real-lm-probing}), we fit a low-rank affine probe from the layer-$\ell$ subject hidden state $\mathbf{s}_\ell$ to the LM-head row of the gold answer's first token $D_{[t_o]}$, using a cosine objective: $\mathcal{L}_r = 1 - \cos(W_r \mathbf{s}_\ell + b_r,\, D_{[t_o]})$, with $W_r = A_r B_r$ a rank-$k=512$ factorization. The cosine objective is scale-invariant, sidestepping the magnitude-collapse we observed with squared error against rows of $D$. Probes are trained per (category, relation) on a subject-disjoint 80/20 split, and Hits@1 ranks $W_r \mathbf{s}_\ell + b_r$ against every row of $D$. We sweep five LMs (Qwen2.5-0.5B, Qwen3-14B, Llama-3.1-8B, Llama-3.2-1B, Phi-4) and six entity categories (\textit{people}, \textit{companies}, \textit{films}, \textit{species}, \textit{buildings}, \textit{programming languages}). For each entity category we have several appropriate relations, such as \emph{birth place} and \emph{occupation} for the \emph{person} category.
\citet{hernandez2023linearity} previously tested linear encoding of relations, but targeted the hidden state leading to the attribute rather than the output embeddings, relied on the Jacobian based on a few training examples, and reported a \emph{lack} of linear encoding for the latter. Our larger per-relation data and cosine objective allow us to show output embeddings \emph{are} linearly encoded.

\textbf{Findings.}
We find a linearly decodable relational signal in subject hidden states across all five LMs and six categories: per-relation Hits@1 substantially exceeds random-vocabulary chance ($1/V \approx 10^{-5}$) and a constant majority baseline, with best-layer MRR (averaged over models) reaching $0.69$ on \textit{people}, $0.58$ on \textit{companies}, $0.56$ on \textit{buildings}, $0.55$ on \textit{programming languages}, and $0.44$ on \textit{films} (full per-(model, category) breakdowns in \cref{app:real-lm-probing}). This encoding is not necessarily linear at layer 0---input embeddings carry some signal but are considerably weaker than mid-late layers (e.g.\ for \textit{people}, layer-0 MRR is $0.44$ vs.\ $0.69$ at the best layer)---and Hits@1 rises through middle layers and plateaus, with the optimal layer roughly 50--80\% through the network, suggesting the structure our theory predicts emerges via layer-by-layer enrichment, consistent with \citet{geva2023dissecting}. Crucially, our probe targets a strictly more demanding object than \citet{hernandez2023linearity}, who use an affine map from $\mathbf{s}$ to the last-layer hidden state $\mathbf{o}$ (decoded by $D$): we predict the LM-head row $D_{[t_o]}$ directly from $\mathbf{s}$ and show this is feasible, indicating that subject hidden states encode \emph{output-vocabulary} directions for their attributes linearly, not merely a route into a non-linearly-decoded intermediate state.

\section{Discussion}

We develop a theoretical and empirical account of \emph{geometric} factual memorization in transformers. In a controlled setting where a single-layer transformer memorizes random subject--attribute bijections, we prove that dimension $d = O(R \log N)$ suffices: subject embeddings store linear superpositions of attribute vectors, while a small relation-conditioned ReLU MLP selects the relevant attribute. We extend this to multi-hop reasoning, proving a capacity--depth tradeoff without chain-of-thought (\cref{thm:k-hop lower bound,thm:k-hop-non-cot}) and showing that chain-of-thought reduces $k$-hop reasoning to the single-hop case (\cref{thm:k-hop-cot}). Empirically, gradient descent recovers the predicted structure. We show that attributes are linearly encoded, and use causal interventions to demonstrate a simple selection mechanism over this linear encoding. 

Our setting treats relations as opaque tokens and attributes as arbitrary entities, but real factual knowledge carries rich semantic structure, such as correlated relations, typed attributes, polysemy, multi-token entities, and long-tailed frequencies. Extending the geometric account to data with such regularities, where the optimal embedding geometry should reflect them rather than encode each fact independently, would bring the theory closer to natural language. Additionally, generalizing the single layer constructions to deep models can shed light on the implementation of factual recall in pretrained LLMs. 

\section*{Acknoweldgements}
We thank Yanai Elazar for his valuable comments. 

\bibliographystyle{plainnat}
\bibliography{refs}

\clearpage
\newpage

\appendix
\section*{Appendix}
\crefalias{section}{appendix}
\crefalias{subsection}{appendix}
\crefname{appendix}{Appendix}{Appendices}
\Crefname{appendix}{Appendix}{Appendices}

\section{Additional Related Work}\label{app:extended-related}

\textbf{Empirical capacity of factual storage.}
\citet{roberts2020much} introduced closed-book QA as a capacity probe and showed that storable knowledge scales with model size. The most direct empirical analog of our capacity claims is the Physics of Language Models series: \citet{allenzhu2023knowledge1} isolate when memorized facts become extractable, \citet{allenzhu2023knowledge2} document that simple manipulations of stored facts demand more capacity than retrieval, and \citet{allenzhu2024knowledge3} establish a $\sim$2-bits-per-parameter scaling law.

\textbf{Tensor decompositions for relational data.}
The multi-relation fact-storage problem has a long history in knowledge-base completion as a tensor-decomposition problem \citep{nickel2011three, yang2014embedding, trouillon2016complex, lacroix2018canonical}. CP, DistMult, and ComplEx factorize the $N \times R \times N$ relation tensor as a sum of rank-one terms. For random bijections, each relation slice is a permutation matrix of matrix rank $N$, and the worst-case CP rank for representing the full tensor is $O(NR)$ \citep{trouillon2017knowledge}, which is substantially above our $O(R \log N)$ bound when $R \ll N$. Beyond this quantitative gap, the distinction is architectural: CP-style methods score triples by a fixed trilinear form, which is not a sub-circuit of any standard transformer. Our question is what mechanism a transformer --- with attention and MLPs but no built-in trilinear product --- uses to solve the same task. Our answer (linear superposition of attribute codes in subject embeddings, decoded by relation-conditioned ReLU gating in the MLP) and the verification that gradient descent recovers this structure are not addressed by the tensor literature. More fundamentally, the transformer is a general-purpose computational system, while CP is a fixed scoring function tailored to single-hop link prediction. This generality is what lets us study compositional phenomena --- multi-hop recall, the interaction between chain-of-thought and embedding capacity, and the depth--width tradeoffs of multi-hop question answering --- which have no natural analog in a trilinear scoring framework.

\textbf{Memorization capacity of neural networks and transformers.}
A long line of work establishes worst-case memorization bounds: $\tilde{O}(\sqrt{N})$ parameters suffice for ReLU networks to memorize $N$ arbitrary points \citep{vardi2022optimal}, with logarithmic width sufficing in the robust regime \citep{egosi2025logarithmic}. For transformers, \citet{yun2020transformers} prove universal approximation via quantize-and-memorize, and subsequent work tightens the constants for arbitrary sequence-to-sequence memorization \citep{kim2023provable, mahdavi2024memorization, kajitsuka2024transformers}. These bounds depend on the number of examples and sequence length and do not exploit relational structure. Our setting --- bijective subject--relation--attribute mappings --- is a standard model of real-world factual recall \citep{nichani2024understanding, allenzhu2024knowledge3, ravfogel2025emergence}.

\section{Experimental Details}\label{app:exp-details}

This appendix collects the full hyperparameter and methodology. All experiments were conducted on a single H200 node. 

\subsection{Architecture}\label{app:exp-arch}

All single-hop and multi-hop experiments share the same backbone: a single-layer transformer with one attention head and a two-layer MLP with GELU activations and expansion ratio~4, using Pre-LayerNorm with RMSNorm. Subjects and attributes draw from a single embedding pool of size~$N$, with separate input and output projections (i.e.\ the input embedding $E_{\text{in}}$ and the output unembedding $E_{\text{out}}$ are independent matrices, both indexed by the shared $N$-token entity vocabulary, where the same token can serve as a subject in one instance and as an attribute in another). The single-hop runs use uniform fixed attention; the multi-hop runs use a single \emph{learned} attention head together with learned positional embeddings.

\subsection{Single-Hop Training}\label{app:exp-singlehop}

Each model is trained with AdamW (learning rate~$1.0$, weight decay~$0.1$) for $15{,}000$ steps with batch size~$1024$ and gradient clipping at norm~$1$, with early stopping when training loss falls below~$10^{-4}$. We run three seeds per $(d, R)$ configuration and report mean~$\pm$~standard deviation throughout. The frozen-embedding control uses identical settings except that the input entity embedding table is held fixed at its random initialization while the attention/MLP weights and the output projection are trained as usual.

\subsection{Analysis Methodology}\label{app:exp-analyses}

This subsection gives the precise definitions for each of the four post-training analyses summarized in \cref{sec:exp-setup}.

\textbf{Linear readout.}
For each relation $r$, we fit a linear map $W_r$ via ridge regression such that $s_x W_r \approx a_{g_r(x)}$, and evaluate classification accuracy on held-out subjects. If subject embeddings encode a superposition of all $R$ attribute vectors as predicted by \cref{thm:shared attr}, a per-relation linear readout should recover each one with high accuracy.

\textbf{Causal interventions.}
Using the per-relation readout maps $W_r$, we perform counterfactual perturbations on the subject embedding: we swap the attribute for a queried relation~$r$ via the minimum-norm perturbation $\delta = \Delta a \cdot W_r^+$, where $\Delta a$ is the difference between the substituted and original attribute vectors and $W_r^+$ is the (rank-truncated) Moore--Penrose pseudoinverse. We measure two quantities: (A)~whether the MLP's prediction \emph{follows} the change for relation~$r$, and (C)~whether predictions for all other relations remain \emph{stable}. Their geometric mean,
\[
    \text{selectivity} \;=\; \sqrt{A \cdot C},
\]
captures whether the MLP acts as a relation-specific selector. Since the pseudoinverse is sensitive to small singular values, we sweep over rank-$k$ truncations of $W_r$ and report the best selectivity per cell.

\textbf{MLP freeze experiment.}
After training, we freeze all parameters (and, in particular, the MLP) except the subject embeddings, and sample fresh random bijections $g'_r$. We then reinitialize the subject embeddings either (a)~randomly (the baseline) or (b)~via a ``smart initialization'' that solves for the superposition encoding the new attributes:
\[
    s_x \;=\; \text{target} \cdot W_{\text{stack}}^+,
    \qquad
    W_{\text{stack}} \;=\; [\,W_1 \mid \cdots \mid W_R\,],
    \qquad
    \text{target} \;=\; \mathrm{concat}_r\bigl(a_{g'_r(x)}\bigr).
\]

Which is the solution to:

\[
    s_x^\star \;=\; \arg\min_{s} \|s\|_2 \quad \text{s.t.} \quad s\, W_{\text{stack}} = \text{target}
\]

If the MLP has learned a generic relation-conditioned selector, the smart initialization should yield high \emph{zero-shot} accuracy on $g'$ without any retraining. If, instead, the MLP itself stores the subject--attribute mapping (as implicitly assumed by previous algebraic associative-memory accounts), then zero-shot accuracy on the new bijections should remain at chance.

\section{Additional Experimental Results}\label{app:additional-experiments}
In \cref{fig:acc-heatmap} we show the final accuracy of the models trained in \cref{sec:experiments} with frozen or trainable embeddings.

\begin{figure}[h]
    \centering
    \includegraphics[width=0.8\textwidth]{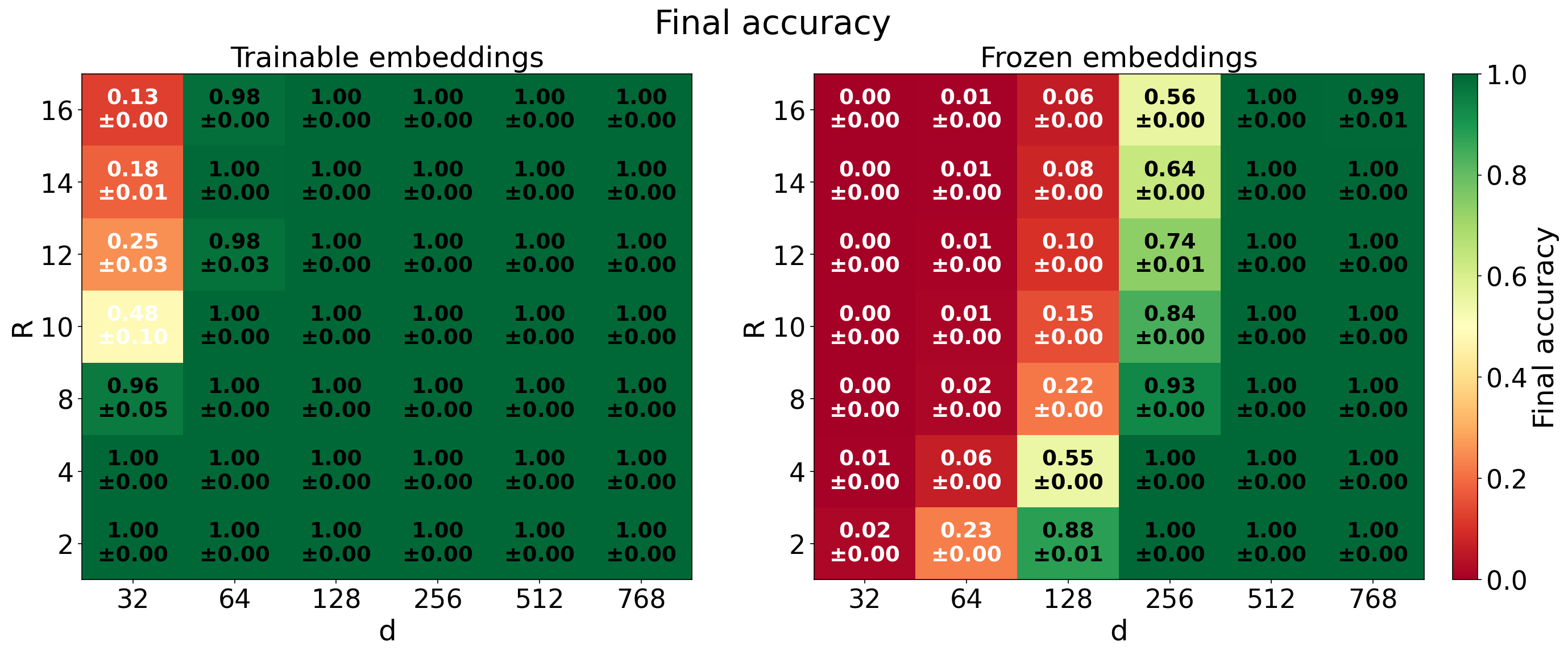}
    \caption{Final accuracy after training, across number of relations~$R$ and embedding dimension~$d$.}
    \label{fig:acc-heatmap}
\end{figure}

\subsection{Multi-Hop Experiments}\label{app::exp-multihop}

We complement the single-hop experiments above with a experiments on the multi-hop setting. Recall that our theoretical constructions predict a fundamental capacity--depth tradeoff: without chain-of-thought (CoT), solving $k$-hop queries requires either larger embeddings ($d = O(R^k \log N)$) or a wide MLP ($\text{width} = O(N \cdot R)$), whereas with CoT, a single-layer transformer with $d = O(R \log N)$ suffices for any number of hops.

\begin{figure}[h]
    \centering
    \includegraphics[width=0.85\textwidth]{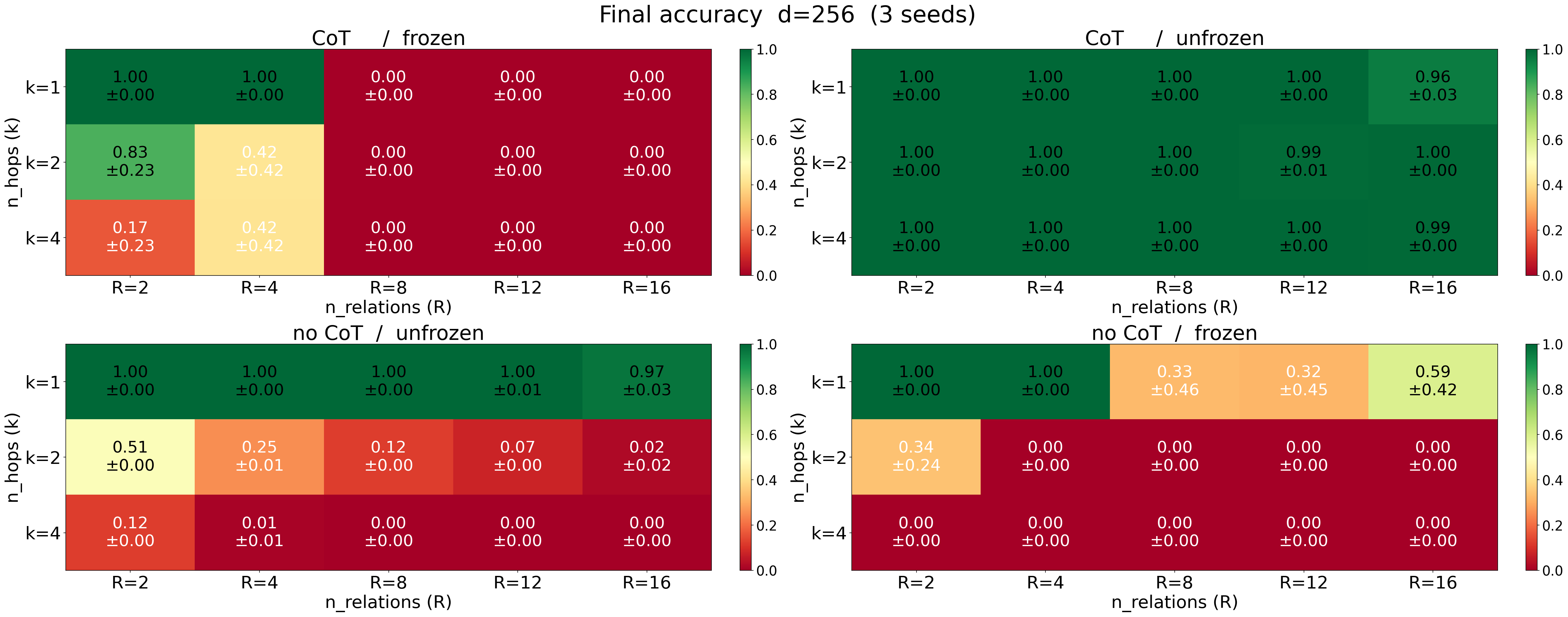}
    \caption{Multi-hop accuracy across number of hops~$k$ and relations~$R$ for $N = 2048$, $d = 256$, averaged over 3 seeds (cells show mean~$\pm$~std).}
    \label{fig:multihop-acc}
\end{figure}

\paragraph{Setup.}
We fix $N = 2048$ subjects and embedding dimension $d = 256$, and sweep over $k \in \{1, 2, 4\}$ hops and $R \in \{2, 4, 8, 12, 16\}$ relations. The architecture is the same single-layer transformer used in the single-hop experiments, with one \emph{learned} attention head (rather than uniform fixed attention) and learned positional embeddings. We compare four conditions in a $2 \times 2$ design: with or without chain-of-thought (CoT), and with embeddings unfrozen (learned) or frozen at random initialization. Three seeds per cell, mean~$\pm$~std reported.

Each model is trained with AdamW (learning rate~$10^{-2}$, weight decay~$0.1$) for up to $15{,}000$ steps with batch size~$1024$ and gradient clipping at norm~$1$. Training stops early when either eval accuracy reaches $0.999$ or training loss falls below $10^{-4}$ for three consecutive log checks. We run three seeds per $(k, R, \text{CoT}, \text{frozen})$ cell and report mean~$\pm$~std.

\paragraph{Results.}
\cref{fig:multihop-acc} shows final accuracy across all $(k, R)$ configurations for each condition. The results reveal a clear pattern consistent with the theoretical predictions. With CoT and learned embeddings (top-right panel), the model achieves near-perfect accuracy across all configurations, including $k = 4$ hops with $R = 16$ relations. Without CoT (bottom-left, learned embeddings), performance degrades sharply as $k$ and $R$ increase: even at $d = 256$, the embedding dimension becomes insufficient to encode the exponentially growing answer space. Freezing the embeddings (top-left and bottom-right) collapses performance in nearly all cells: with no learned embeddings, even with CoT the model can only solve the easiest cases ($k = 1$ and very small $R$), confirming that learnable embeddings are essential for efficient multi-hop reasoning. These results provide empirical evidence for the capacity--depth tradeoff predicted by theory: CoT enables efficient multi-hop computation by distributing the reasoning across autoregressive steps, circumventing the exponential embedding bottleneck.

\section{Frozen-Embedding Control: Full Results}\label{app:frozen-results}

To isolate the role of \emph{learnable} embeddings in the geometric memorization mechanism, we re-ran the entire single-hop sweep of \cref{sec:experiments} with the input entity embedding table frozen at its random initialization. All other training hyperparameters and analyses are identical to the trainable-embedding runs. This appendix collects the key figures from that control sweep.

\paragraph{Capacity threshold under frozen embeddings.}
\cref{fig:frozen-acc} shows the final accuracy heatmap. The frozen model still reaches perfect memorization, but only at substantially higher embedding dimension than the trainable-embedding model in \cref{fig:acc-heatmap}: e.g., $R=16$ relations require $d \geq 512$ to memorize, vs.\ $d \geq 128$ when embeddings are learned. This is consistent with the parameter-count bound $\tilde\Omega(NR)$ of \citet{nichani2024understanding}, which the brute-force MLP regime must satisfy without the help of structured embeddings.

\begin{figure}[h]
    \centering
    \includegraphics[width=0.5\textwidth]{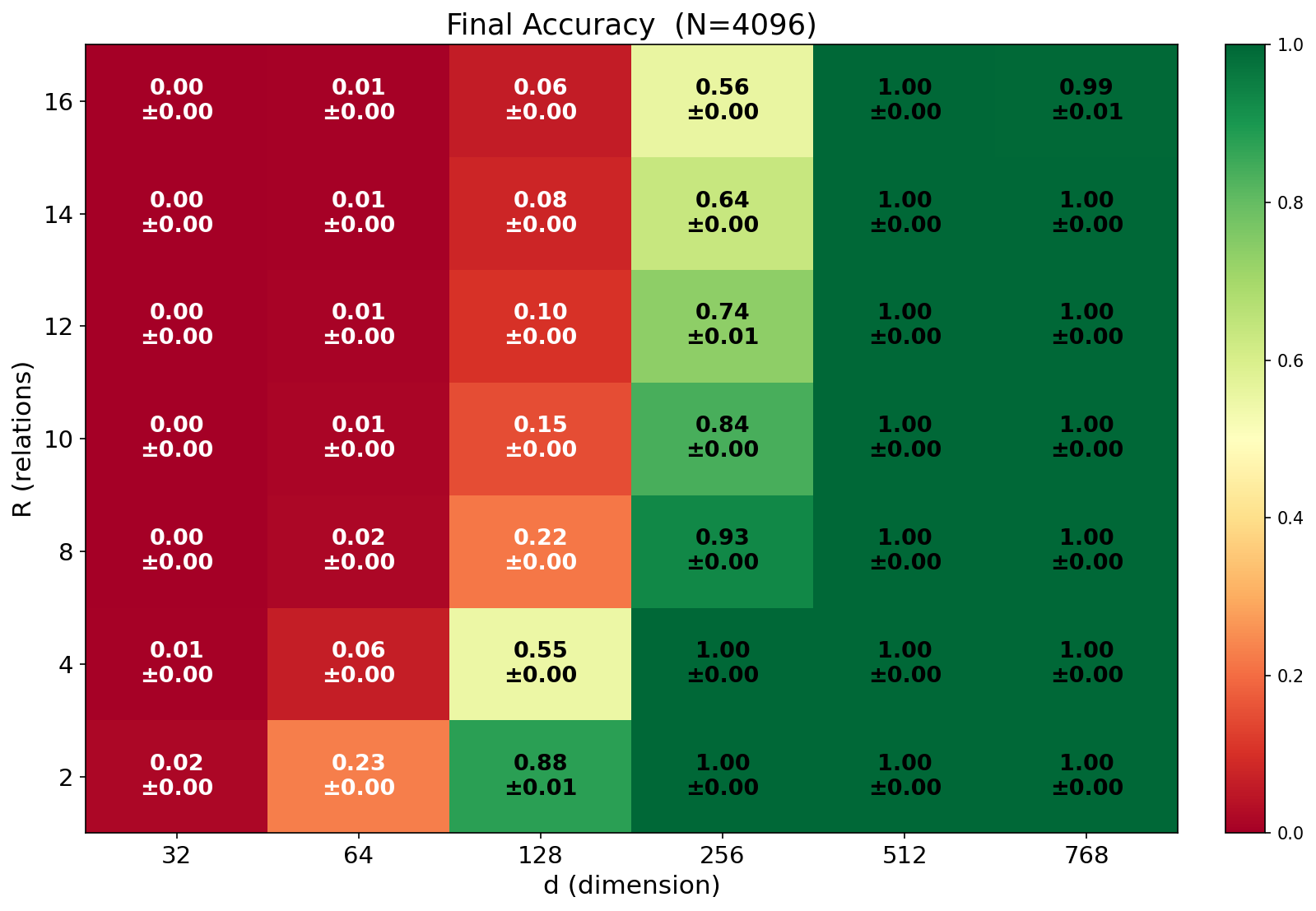}
    \caption{Final accuracy after training, frozen-embedding control. Compare with \cref{fig:acc-heatmap} (trainable embeddings).}
    \label{fig:frozen-acc}
\end{figure}

\paragraph{Linear readout under frozen embeddings.}
\cref{fig:frozen-readout} shows the per-relation linear readout accuracy from both the subject embedding~$s_x$ (left) and the post-attention hidden state~$h$ (right). The $s_x$-readout is the panel reproduced in the main text (right side of \cref{fig:readout-acc}) and is at chance throughout the grid: random fixed embeddings do not encode relation-specific attribute information by construction. The $h$-readout is also low across the grid.

\begin{figure}[h]
    \centering
    \includegraphics[width=0.85\textwidth]{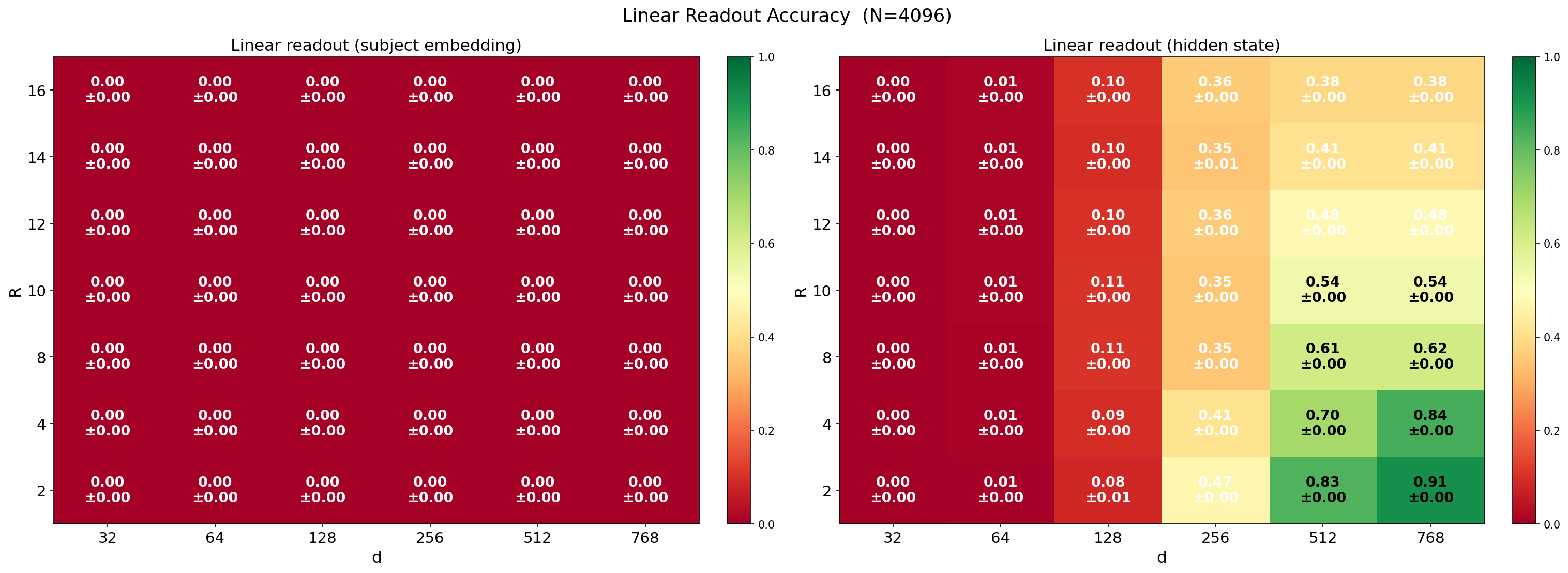}
    \caption{Linear readout accuracy under frozen embeddings, from $s_x$ (left) and from the post-attention hidden state $h$ (right). Both collapse to near-chance, in contrast to the trainable-embedding result of \cref{fig:readout-acc}.}
    \label{fig:frozen-readout}
\end{figure}

\paragraph{MLP freeze experiment under frozen embeddings.}
\cref{fig:frozen-freeze-panel} shows the four-condition freeze panel for the frozen-embedding model. The smart-initialization zero-shot column collapses: with no $W_r$ structure to read off, the smart-init formula $s_x = \text{target}\cdot W_{\text{stack}}^+$ yields essentially random subject embeddings, and zero-shot accuracy on the new bijections~$g'$ is near chance. Retraining the subject embeddings does recover high accuracy in some settings, as it simply re-encodes the new $g'$ into the embedding table directly. This is the qualitative contrast the main text relies on: the trainable-embedding MLP has learned a relation-conditioned selector that is portable across bijections, achieving zero-shot high accuracy at step 0, while the frozen-embedding MLP is a memorizer whose ``transfer'' is just re-fitting the embeddings after enough additional training steps.

\begin{figure}[h]
    \centering
    \includegraphics[width=0.99\textwidth]{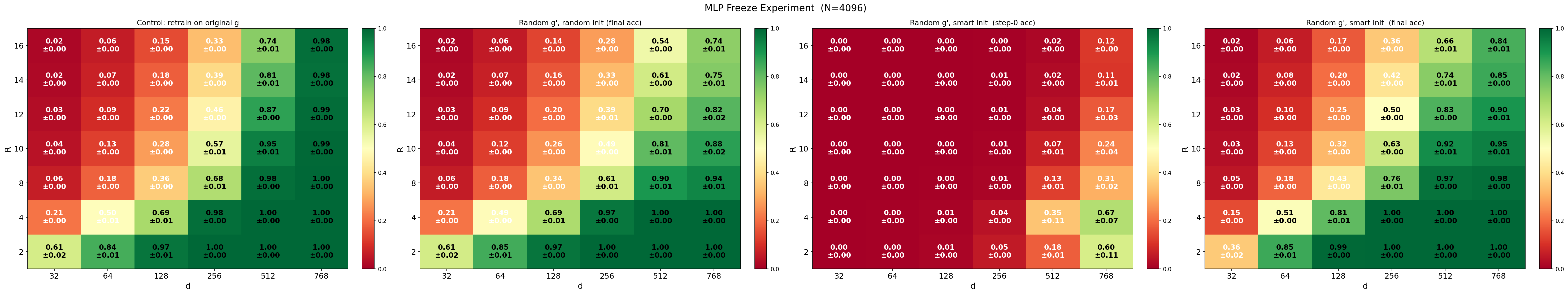}
    \caption{MLP freeze experiment, frozen-embedding control: control (retrain on original~$g$), random-init retrain on~$g'$, smart-init zero-shot, and smart-init after retraining. The smart-init zero-shot column collapses, confirming that the smart-initialization formula does not hold without learned per-relation readouts. }
    \label{fig:frozen-freeze-panel}
\end{figure}

\section{Real-LM Probing — Details}\label{app:real-lm-probing}

This appendix backs the experiment in \cref{sec:real-lm-probing}: how the subject--attribute dataset was generated, the probe-training recipe, and the full per-(model, category) results.

\subsection{Data generation pipeline}\label{app:real-lm-probing-data}

We assemble subjects and ground-truth attributes for six categories — \textit{people}, \textit{companies}, \textit{films}, \textit{species}, \textit{buildings}, \textit{programming languages} — by prompting Claude in two stages, with a confidence-or-skip filter applied at the entity level. 

\paragraph{Generation (\texttt{claude-sonnet-4-6}).} Each category specifies an \texttt{entity\_kind} string and a list of \texttt{(field\_key, field\_description)} pairs, one per relation in that category (e.g.\ for \textit{people}: \texttt{name}, \texttt{city\_of\_birth}, \texttt{country\_of\_birth}, \texttt{father\_name}, \texttt{mother\_name}, \texttt{main\_spouse\_name}, \texttt{gender}, \texttt{religion}, \texttt{occupation}, \texttt{main\_language}, \texttt{century\_of\_birth}). We iterate batches of \texttt{BATCH\_SIZE=50} until the category yields fewer than $\texttt{EARLY\_STOP\_NEW\_UNIQUE}=8$ new uniques in a batch, or hits a target of $\sim$2000 entities. Each batch includes the running exclusion list (capped at 600 entities) so the model does not repeat. The entity is dropped if any attribute matches the placeholder set $\{$``unknown'', ``N/A'', ``varies'', ``uncertain'', ``?'', $\ldots$$\}$.  The verbatim generation prompt (with \texttt{\{...\}} placeholders evaluated at call time):

\begin{small}\begin{verbatim}
You are generating a high-quality factual dataset of {entity_kind}s.

### Task
Produce a JSON array of {batch_size} entities. For each entity, fill in EVERY one of the
following attributes. Each entity is a flat JSON object with exactly these keys:

  - "name": Canonical name
  - "city_of_birth": City of birth
  ...    (all relation-fields for this category)

### Already generated (DO NOT repeat)
You have already generated {N} entities for this category. Here is a representative
sample of {min(N,600)} of them -- DO NOT include any of these or any close variant
of them again:
[ "entity_a", "entity_b", ... ]

### CRITICAL -- Confidence-or-skip rule
Only include an entity if you are CERTAIN about EVERY SINGLE ONE of its attributes.
If you have any doubt about even one field for a given entity, SKIP that entity
entirely and pick a different one. NEVER write "unknown", "N/A", "varies", or any
placeholder. NEVER guess. NEVER infer plausibly. It is far better to return 30
fully-confident entities than 50 with one shaky field each.

### CRITICAL -- Returning fewer than {batch_size} is allowed and expected
{batch_size} is a CEILING, not a quota. If you genuinely cannot find {batch_size}
new confident entities that are not in the exclusion list, return fewer. A short
list signals that the well is dry for this category, so the pipeline should stop.

### CRITICAL -- Well-known entities first; real entities only
Prefer widely documented, canonical entities (Wikipedia-grade). No fictional,
hypothetical, or composite entities. No generic placeholders.

### Output format
YOUR RESPONSE MUST BE A SINGLE JSON ARRAY AND NOTHING ELSE. The very first
character must be [ and the very last must be ]. Each element has exactly the
keys: {field_keys_json}.
\end{verbatim}
\end{small}

\paragraph{Templates.} We use 5 paraphrased prompt templates per (category, relation). Example for \textit{people / century\_of\_birth}:
\begin{small}\begin{verbatim}
"{subj} was born in the {obj}"
"The century in which {subj} was born is the {obj}"
"Historically, the era of {subj}'s birth falls in the {obj}"
"By century, {subj} entered the world in the {obj}"
"On the timeline of centuries, {subj} was born during the {obj}"
\end{verbatim}
\end{small}
One template is sampled deterministically per probe run and used for every entity in that run.

\textbf{Corpus.} The corpus composition (entities and relations per category, after tokenization filtering) is summarized in \cref{tab:lm-recall-corpus}.

\begin{table}[h]
\centering
\small
\renewcommand{\arraystretch}{1.15}
\resizebox{\linewidth}{!}{%
\begin{tabular}{l@{\hspace{1.2em}}r@{\hspace{1.2em}}r@{\hspace{1.2em}}p{0.55\linewidth}}
\toprule
\textbf{Category} & \textbf{\#~entities} & \textbf{\#~relations} & \textbf{Relations} \\
\midrule
\textit{buildings}             & $1{,}662$ & $9$  & architect, architectural\_style, building\_type, completion\_year, construction\_start\_year, floor\_count, height\_meters, location, purpose \\
\textit{companies}             & $1{,}312$ & $6$  & country\_of\_incorporation, founding\_year, headquarters\_location, industry, legal\_structure, main\_founder \\
\textit{films}                 & $\phantom{0}\phantom{0}442$ & $12$ & based\_on, country\_of\_origin, director, distributor, genre, lead\_cast, original\_language, producer, production\_company, release\_year, runtime\_minutes, screenwriter \\
\textit{people}                & $\phantom{0}\phantom{0}866$ & $7$  & century\_of\_birth, city\_of\_birth, country\_of\_birth, gender, main\_language, occupation, religion \\
\textit{programming\_languages} & $\phantom{0}\phantom{0}328$ & $8$  & designer, file\_extension, first\_appeared\_year, implementation\_language, influenced\_by, license, paradigm, typing\_discipline \\
\midrule
\textbf{Total} & $4{,}610$ & $42$ & \\
\bottomrule
\end{tabular}%
}
\caption{Entity and relation pool for the LM-recall corpus. Each (entity, relation) instance is rendered through five paraphrased templates, yielding ${\approx}184$k training sentences in total.}
\label{tab:lm-recall-corpus}
\end{table}

\subsection{Probe training details}\label{app:real-lm-probing-train}

For each (category, relation) pair we instantiate a per-relation \texttt{LinearProbe(d\_in $\to$ rank $\to$ d\_out)} with biases on both factors. \texttt{d\_in} is the dimension of the (aggregated) subject hidden state at the chosen layer; \texttt{d\_out} is the LM's hidden width (= the unembedding row width). We use rank $k=512$ uniformly across models, AdamW with learning rate $10^{-3}$ and weight decay $10^{-3}$, 200 epochs of training with patience-5 early stopping, and the cosine objective $\mathcal{L} = 1 - \cos(W_r \mathbf{s}_\ell + b_r,\, D_{[t_o]})$. The split is subject-disjoint 80/20 within each (category, relation): no subject appears in both train and val. We run two subject-aggregation modes: \texttt{last} (default; numbers in the body and in \cref{tab:real-lm-best-mrr}) and \texttt{mean} (a robustness check, comparable in magnitude). Probes are fit independently per layer, so the \emph{best-layer} numbers report the best per-(model, category, layer) hits@1.

\subsection{Full results}\label{app:real-lm-probing-results}

\cref{tab:real-lm-best-mrr} reports best-layer MRR for every (model, category) we ran, alongside the layer-0 MRR (token-embedding lookup) and the index of the best layer as a fraction of total depth. The pattern is consistent: layer-0 carries some signal, the best layer is in the latter half of the stack, and best-layer MRR is significantly higher than layer-0.

\cref{fig:real-lm-perlayer} shows the layer-by-layer profile for all five LMs on the \textit{people} category. Across every model, MRR / Hits@1 / Hits@10 climb through the network's middle layers and plateau in the upper layers; layer counts differ but the qualitative shape is the same. \cref{fig:real-lm-per-relation} then shows the per-relation breakdown at the best layer for one representative model (Qwen3-14B), revealing which relations are easier (e.g.\ \textit{gender}, \textit{country\_of\_birth}) and which are harder (e.g.\ \textit{religion}, \textit{occupation}).

\begin{figure}[h]
    \centering
    \begin{minipage}[t]{0.48\textwidth}\centering
        \includegraphics[width=\linewidth]{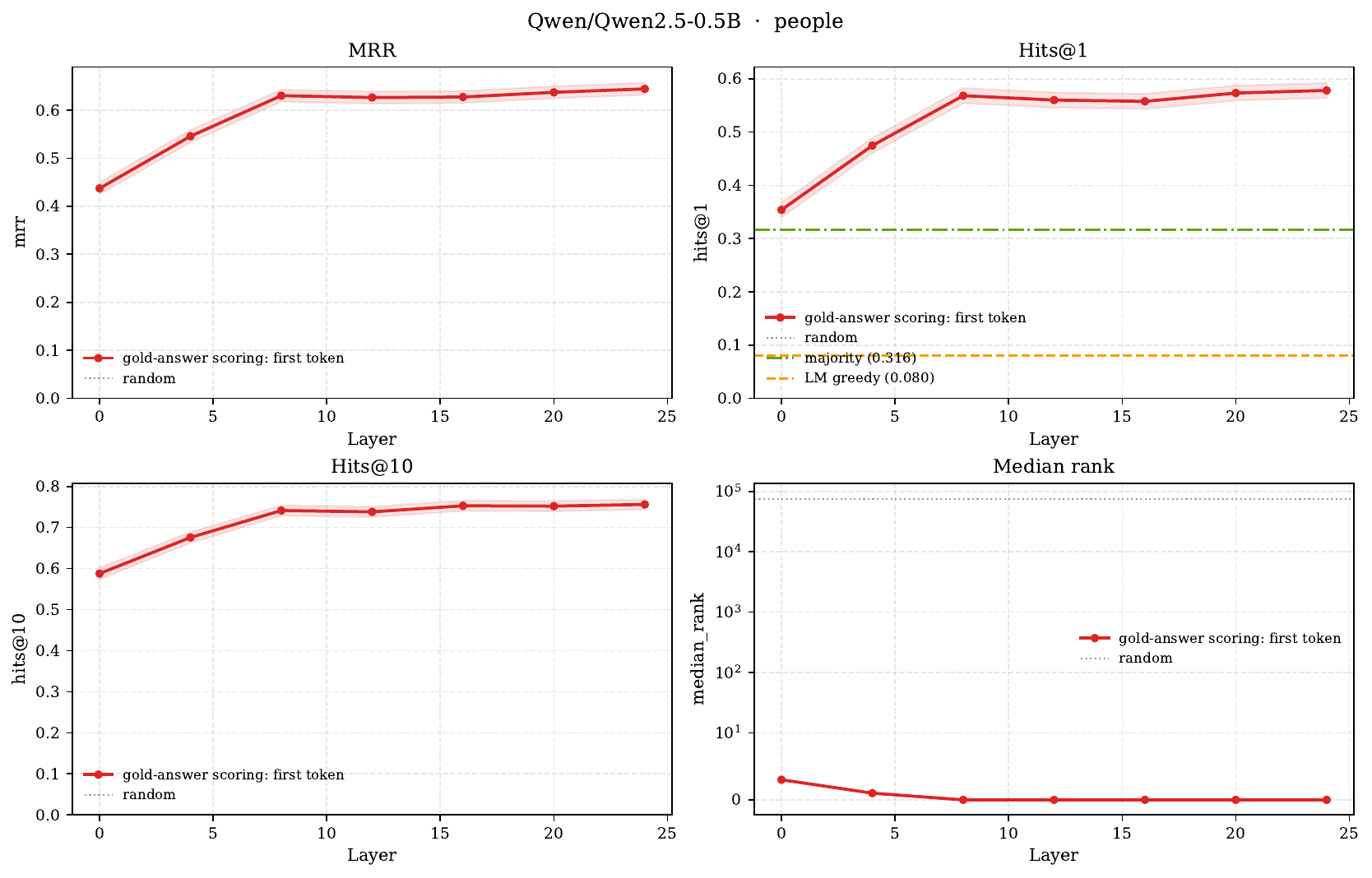}\\
        \textbf{Qwen2.5-0.5B}
    \end{minipage}\hfill
    \begin{minipage}[t]{0.48\textwidth}\centering
        \includegraphics[width=\linewidth]{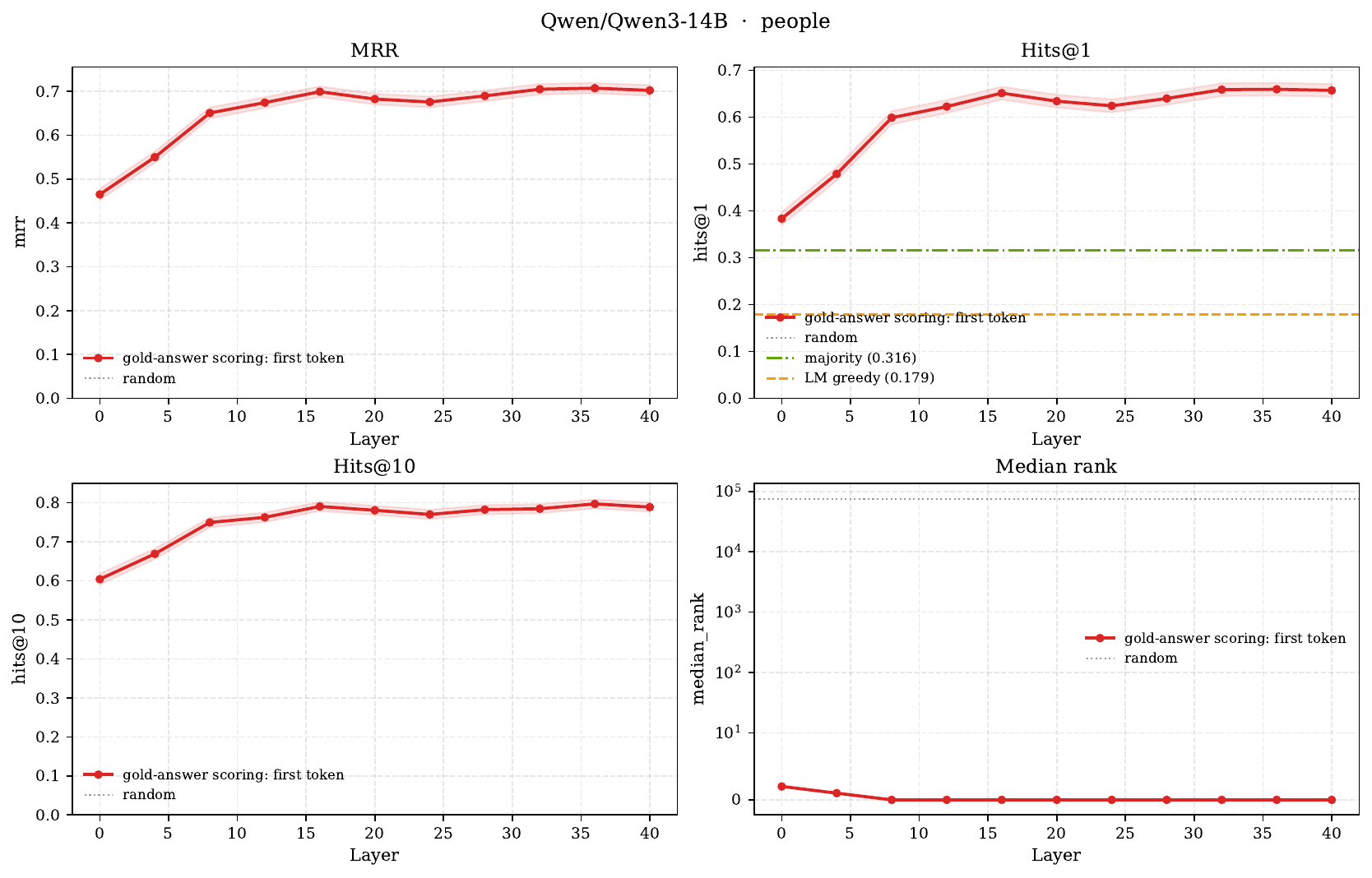}\\
        \textbf{Qwen3-14B}
    \end{minipage}

    \vspace{4pt}

    \begin{minipage}[t]{0.48\textwidth}\centering
        \includegraphics[width=\linewidth]{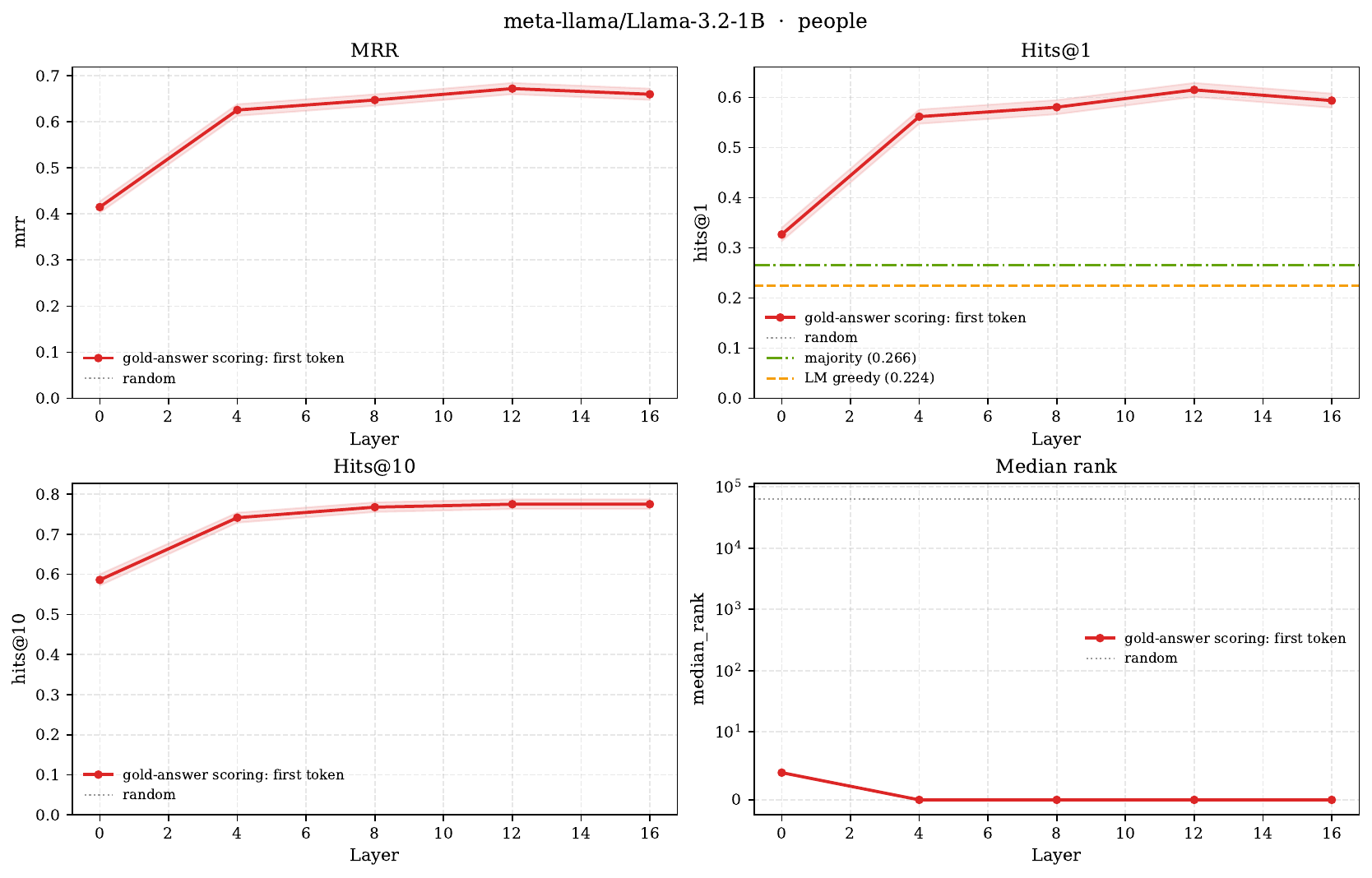}\\
        \textbf{Llama-3.2-1B}
    \end{minipage}\hfill
    \begin{minipage}[t]{0.48\textwidth}\centering
        \includegraphics[width=\linewidth]{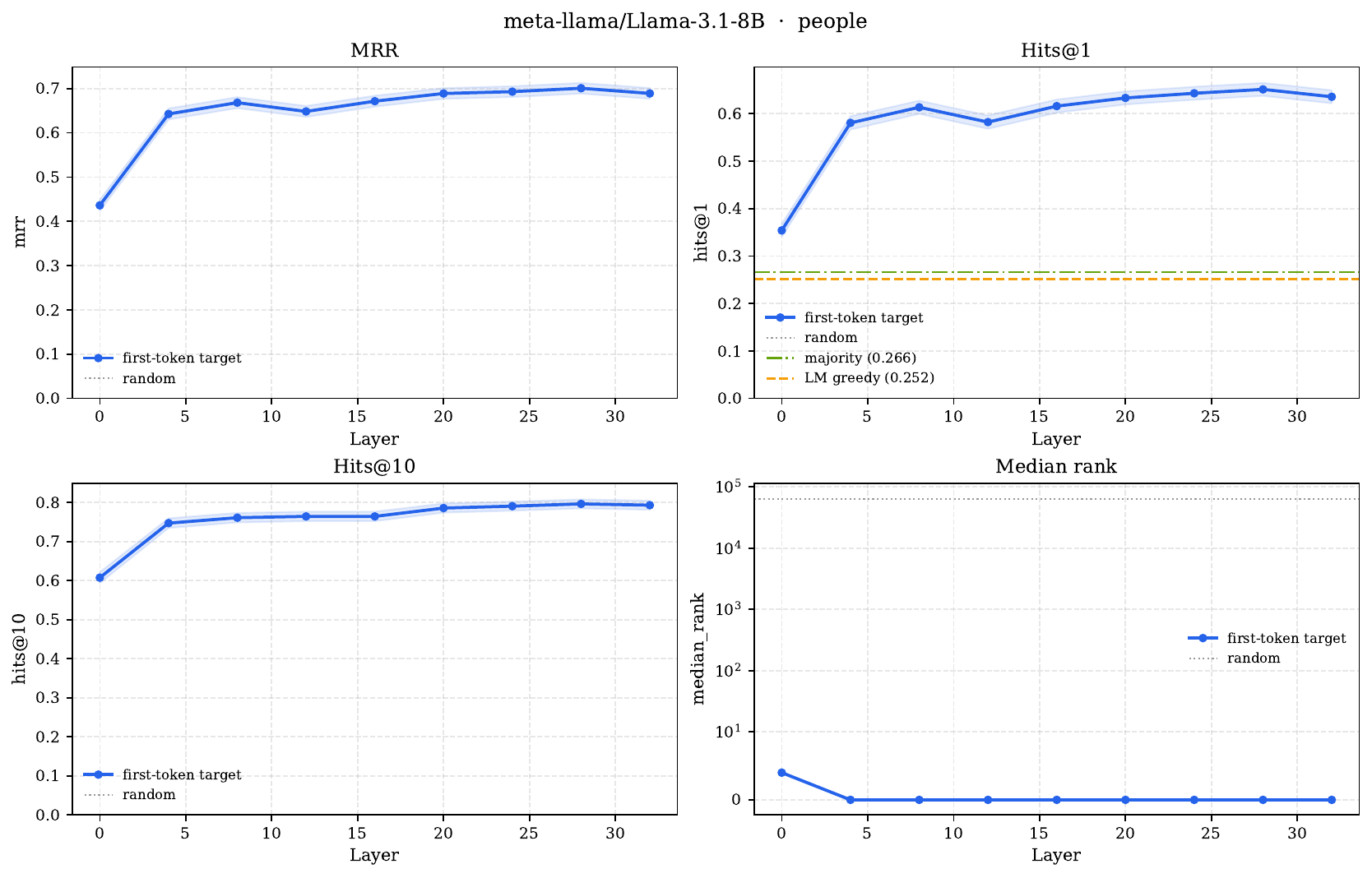}\\
        \textbf{Llama-3.1-8B}
    \end{minipage}

    \vspace{4pt}

    \begin{minipage}[t]{0.48\textwidth}\centering
        \includegraphics[width=\linewidth]{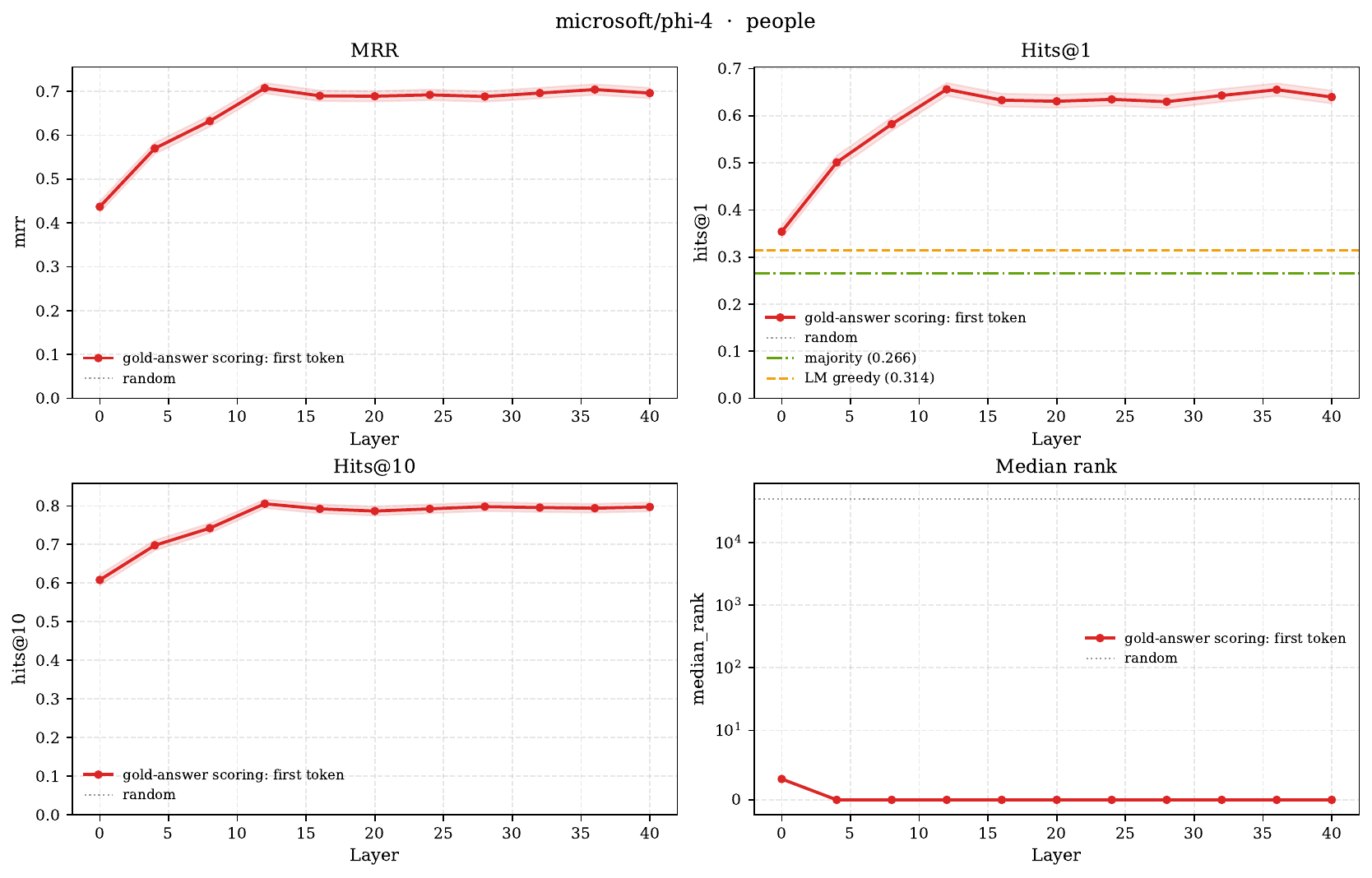}\\
        \textbf{Phi-4}
    \end{minipage}

    \caption{Per-layer linear-readout MRR / Hits@1 / Hits@10 on the \textit{people} category for every LM in the sweep. Across all five models the same qualitative trend holds: layer-0 carries non-trivial signal, the answer's output-embedding direction concentrates in the subject's hidden state through the middle of the network, and the curves plateau in the upper layers (with slight decline at the very top of some models). Layer count varies (24 for Qwen2.5-0.5B, 16 for Llama-3.2-1B, 32 for Llama-3.1-8B, 40 for Qwen3-14B and Phi-4), but the \emph{fraction} of depth at which the peak occurs is roughly comparable.}
    \label{fig:real-lm-perlayer}
\end{figure}

\begin{figure}[h]
    \centering
    \includegraphics[width=0.75\textwidth]{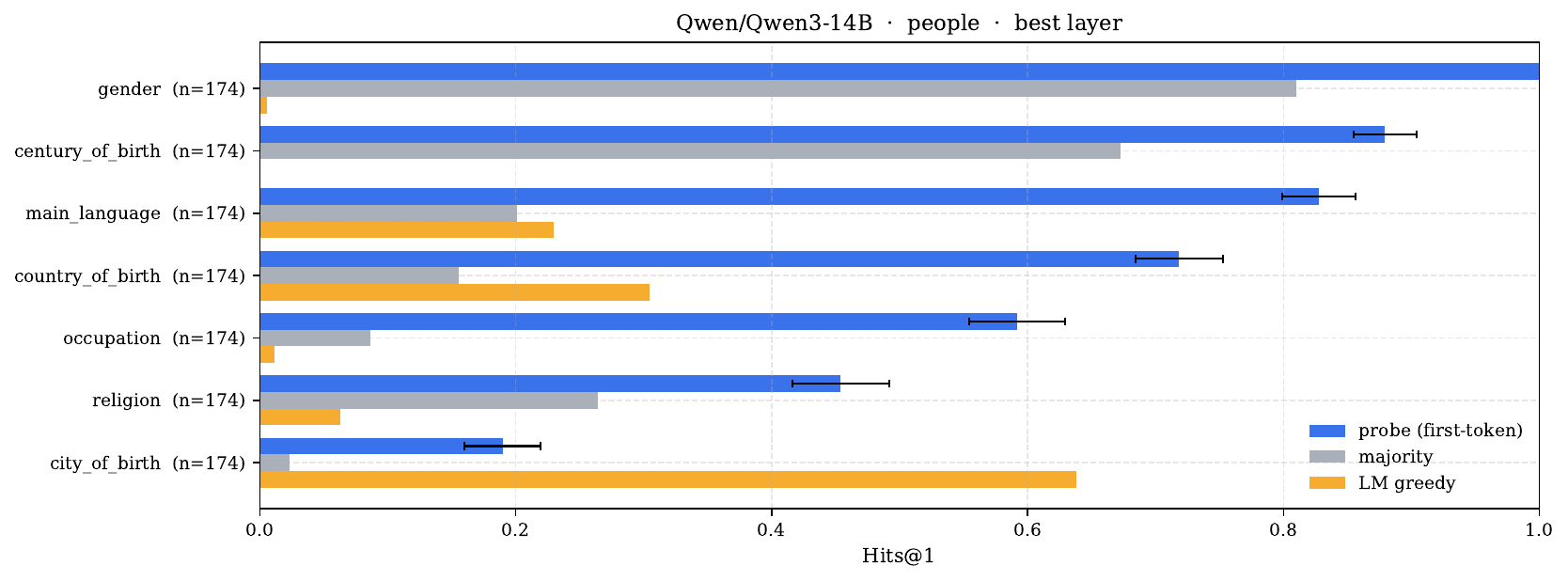}
    \caption{Per-relation Hits@1 at the best layer for Qwen3-14B, \textit{people}. The probe is fit independently per (category, relation), and Hits@1 varies considerably across relations.}
    \label{fig:real-lm-per-relation}
\end{figure}

\begin{table}[h]
\centering
\small
\resizebox{\linewidth}{!}{%
\begin{tabular}{l l c c c c}
\toprule
Model & Category & Layer-0 MRR & Best-layer MRR & Best-layer Hits@1 & Best layer / depth \\
\midrule
Qwen2.5-0.5B & people & 0.437 & 0.644 & 0.578 & 24/24 \\
Qwen2.5-0.5B & companies & 0.440 & 0.565 & 0.523 & 24/24 \\
Qwen2.5-0.5B & buildings & 0.477 & 0.583 & 0.489 & 24/24 \\
Qwen2.5-0.5B & films & 0.315 & 0.381 & 0.326 & 24/24 \\
Qwen2.5-0.5B & prog. langs. & 0.451 & 0.528 & 0.477 & 24/24 \\
\midrule
Qwen3-14B & people & 0.465 & 0.707 & 0.659 & 36/40 \\
Qwen3-14B & companies & 0.461 & 0.640 & 0.594 & 36/40 \\
Qwen3-14B & buildings & 0.490 & 0.672 & 0.588 & 40/40 \\
Qwen3-14B & films & 0.353 & 0.514 & 0.464 & 36/40 \\
Qwen3-14B & prog. langs. & 0.483 & 0.606 & 0.553 & 12/40 \\
\midrule
Llama-3.1-8B & people & 0.436 & 0.701 & 0.651 & 28/32 \\
Llama-3.1-8B & companies & 0.361 & 0.575 & 0.510 & 24/32 \\
Llama-3.1-8B & buildings & 0.308 & 0.547 & 0.462 & 32/32 \\
Llama-3.1-8B & films & 0.256 & 0.460 & 0.404 & 28/32 \\
Llama-3.1-8B & prog. langs. & 0.466 & 0.553 & 0.475 & 8/32 \\
\midrule
Llama-3.2-1B & people & 0.415 & 0.672 & 0.615 & 12/16 \\
Llama-3.2-1B & companies & 0.343 & 0.535 & 0.468 & 16/16 \\
Llama-3.2-1B & buildings & 0.296 & 0.465 & 0.369 & 16/16 \\
Llama-3.2-1B & films & 0.249 & 0.390 & 0.335 & 12/16 \\
Llama-3.2-1B & prog. langs. & 0.430 & 0.516 & 0.453 & 8/16 \\
\midrule
Phi-4 & people & 0.437 & 0.707 & 0.656 & 12/40 \\
Phi-4 & companies & 0.363 & 0.565 & 0.500 & 40/40 \\
Phi-4 & buildings & 0.303 & 0.536 & 0.449 & 40/40 \\
Phi-4 & films & 0.261 & 0.434 & 0.374 & 16/40 \\
Phi-4 & prog. langs. & 0.440 & 0.545 & 0.466 & 12/40 \\
\bottomrule
\end{tabular}
}
\caption{Per-relation linear-readout best-layer results, subject-mean readout target = LM-head row of the answer's first token. Each cell is averaged over the relations within that category. Layer-0 MRR is non-trivial but consistently lower than the best layer, supporting the layer-by-layer enrichment picture in \cref{sec:real-lm-probing}.}
\label{tab:real-lm-best-mrr}
\end{table}

\subsection{Training Transformer Language Model on Natural Language Relational Data}

To complement the above experiment, we also train small Transformer LMs on the same natural language relational data used for probing. We instantiate a single-layer GPT-style decoder (Pre-RMSNorm; one attention head; untied input/output embeddings; learned absolute position embeddings; embedding dimension $d{=}256$; MLP hidden width $4d$; $\sim\!7.3\text{M}$ parameters in total) over a word-level vocabulary of size $|V|{=}12{,}752$ built from the union of all rendered prompts. The corpus contains ${\approx}184$k sentences obtained by rendering each of $42$ relations across $5$ of the categories from \cref{sec:real-lm-probing} (\textit{people}, \textit{companies}, \textit{films}, \textit{buildings}, \textit{programming\_languages}) through the same five paraphrased templates per relation. Training uses full-sequence next-token cross-entropy for $20$ epochs (batch size $1024$, AdamW with constant learning rate $5{\times}10^{-4}$ and weight decay $10^{-3}$).

After training, the LM has effectively memorized the corpus, attaining greedy Hits@1 of $99.4\%$ on a $16{,}384$-example subsample (per-category range $[96.9\%, 99.9\%]$). We then apply the linear-readout protocol of \cref{sec:real-lm-probing} (with aggregating over \emph{all} the subject's tokens, cosine scoring against the LM-head row of the answer's first token, subject-disjoint $90/10$ split inside each $(\text{category}, \text{relation})$ bucket). The post-embedding hidden state (layer 0) admits substantially sharper linear decoding than the post-block representation (layer 1): mean full-vocabulary Hits@1 of $0.71$ vs.\ $0.52$, with per-category averages of $0.85/0.57$ (\textit{people}), $0.80/0.63$ (\textit{programming\_languages}), $0.73/0.68$ (\textit{companies}), $0.69/0.48$ (\textit{buildings}), and $0.56/0.32$ (\textit{films}). This is consistent with our theoretical construction (\cref{sec:fact_recall_theory}): the subject embedding row itself encodes the answers as a near-linear superposition over the $R$ relations, shaped by gradients backpropagating through attention from the answer position; at the subject position, by contrast, the block's local cross-entropy target is the \emph{next template token}, not the answer, so the post-block representation does not preserve the linear-superposition structure as cleanly.

\paragraph{Influence of tokenization.} In this experiment, each word in the corpus gets a separate token, resulting in multi-token subjects and attributes. Interestingly, in a similar experiment with a tokenizer that allocates a single token for each entity (subject or attribute)---even when the entity is spanned over several distinct words---we see even higher linear readout of the attribute information (over 97\%). We leave the study of the effect of tokenization on linear relational encoding to future work.

\section{Disjoint Attributes}\label{appen:disjoint_attr}

Here we consider the simpler setting of the factual recall problem from \cite{nichani2024understanding}. That is, there is a set of attributes $A$ of size $|A| = R\cdot N$. For each relation $r\in[R]$ there is a function $g_r:[N]\rightarrow A$. The disjoint attribute assumption (Assumption $1$ from \cite{nichani2024understanding}) states that if we define $A_r:= \{g_r(s):~s\in[N]\}$, then $A_r\cap A_{r'} = \empty$ for every $r\neq r'$.

\begin{theorem}\label{thm:disjoint-attr}
    In the disjoint attribute case, there is a $1$-layer transformer with embedding dimension $d=4\log(NR)$ that correctly solves the subject-relation-attribute problem.
\end{theorem}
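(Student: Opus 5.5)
The plan is to exploit the disjointness of the attribute sets $A_r$: since each attribute belongs to a unique relation, no relation-conditioned gating is needed. It suffices for the final hidden state at the relation position to have positive inner product with the correct attribute's unembedding and strictly smaller inner product with every other attribute. The natural choice is to make the subject embedding the \emph{sum} of its $R$ attribute codes and the relation embedding a code that favours the block of attributes $A_r$.

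\textbf{Step 1 (codes).} I will assign each attribute $a \in A$ a unembedding vector $u_a \in \{\pm 1/\sqrt{d'}\}^{d'}$ with $d' = 4\log(NR)$. This can be a random sign vector, or any code with pairwise inner products at most $1/4$ in absolute value. Since $|A| = NR$, a union bound (Hoeffding) over the $\binom{NR}{2}$ pairs shows that $d' = 4\log(NR)$ suffices for $|\langle u_a, u_b\rangle| \le \epsilon$ with a moderate $\epsilon$. If the constant $4$ is too tight, I will instead take $u_a$ to be a distinct binary code of length $\log(NR)$, mapped to $\pm 1$ and padded. Distinct $\pm1$ codes give $\langle u_a,u_a\rangle = 1 > \langle u_a,u_b\rangle$, which is all that argmax needs.

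\textbf{Step 2 (superposition).} Naively setting the subject embedding to $e_s = \sum_r u_{g_r(s)}$ makes all $R$ attributes of $s$ score roughly equally. Disjointness lets me fix this with a relation-dependent bias rather than an MLP. I will reserve $\log R$ coordinates (or add a second block) for a relation code, and extend each $u_a$ by a block that encodes the relation $r(a)$ with $a \in A_r$, scaled by a large constant $M$. The relation token's embedding carries the matching relation code, and uniform attention (or attention fixed onto both tokens) averages the subject and relation embeddings. The logit of attribute $a$ is then
\[
\langle u_a, e_s\rangle + M\langle c_{r(a)}, c_r\rangle .
\]
The second term selects the block $A_r$, because the relation codes are distinct with unit norm so that $\langle c_{r'},c_r\rangle \le 1-\delta$ for $r' \neq r$. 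The first term then selects $g_r(s)$ within that block: the subject contributes $\approx 1$ to $g_r(s)$ and only cross-talk of size about $R\epsilon$ to the other elements of $A_r$.

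\textbf{Main obstacle.} The delicate part is the cross-talk. In $\langle u_b, \sum_{r'} u_{g_{r'}(s)}\rangle$ for $b \in A_r\setminus\{g_r(s)\}$, the sum of $R$ small inner products must stay below $1$. With $d = 4\log(NR)$ this is not automatic for large $R$. I would handle it by first trying a structured code: split $\log(NR) = \log R + \log N$ bits into a relation part and a within-block index part. I would then let the embedding of $s$ carry, for each relation, only the index bits of $g_r(s)$. To avoid this being $R\log N$ dimensional, the relation token would select which index is relevant by adding a large offset, making the wrong blocks' logits lose by $M$. If superposed indices still interfere, I would fall back to choosing $\epsilon \le 1/(2R)$ via the random code, at the cost of logarithmic factors. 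Alternatively, I would read the statement's $d = 4\log(NR)$ as allowing a one-hidden-layer MLP that maps the averaged token to $u_{g_r(s)}$ through the key–value structure. Checking that the margin survives the attention averaging, with its factor $1/2$, and any residual terms is routine once the codes are fixed.
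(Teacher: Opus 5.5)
\textbf{There is a genuine gap.} The construction you actually specify does not fit in dimension $4\log(NR)$, and none of the fallbacks closes the distance.

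You fix the attribute codes $u_a$ first (random, or arbitrary distinct codes) and then make the subject embedding the superposition $e_s=\sum_r u_{g_r(s)}$. Inside the block $A_r$, the correct attribute must then beat the interference $\sum_{r'\neq r}\langle u_b,u_{g_{r'}(s)}\rangle$. For random unit codes in dimension $d'$, this is of order $\sqrt{R\log N/d'}$ in the worst case over $b\in A_r$. At $d'=4\log(NR)$ it exceeds the unit signal once $R$ is larger than a small constant.

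Your fallbacks do not repair this. Forcing all pairwise overlaps below $1/(2R)$ does not cost ``logarithmic factors'' but a polynomial factor in $R$: $d'\gtrsim R^2\log(NR)$ by a union bound, and still $d'\gtrsim R\log N$ if you exploit concentration of the sum. That is the shared-attribute rate of \cref{thm:shared attr}, not the disjoint one. Your structured code still needs $R$ separate index blocks, or else they superpose and interfere, as you note yourself. The key--value MLP route is only named, not constructed; even if completed, it would spend width $\Theta(NR)$, which is exactly what this result is meant to avoid.

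\textbf{The missing idea} is that the attribute embeddings are learnable too. You should choose them to encode their own preimage rather than fixing them first. Because the $A_j$ are disjoint and each $g_j$ is injective (implicit in the setup with $|A|=RN$), every attribute equals $g_j(i)$ for a unique pair $(i,j)$.

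The paper proceeds as follows:
\begin{enumerate}
\item It takes sign vectors $\bv_i\in\{\pm1\}^{4\log N}$ and $\bu_j\in\{\pm1\}^{4\log R}$ with off-diagonal inner products at most $3\log N$ and $3\log R$ respectively.
\item It sets the subject embedding to $(\bv_i,\mathbf{0})$, the relation embedding to $(\mathbf{0},\bu_j)$, and the embedding of $g_j(i)$ to $(\bv_i,\bu_j)$. This gives $d=4\log N+4\log R=4\log(NR)$.
\item Uniform attention with $W_V=2I$ outputs $x_i+r_j=(\bv_i,\bu_j)$ at the relation position. This is exactly the target's embedding, and it is the unique maximizer of the inner product over all attributes.
\end{enumerate}
There is no superposition, no cross-talk, and no MLP. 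The fact table lives in the attribute embeddings, not in the subject embedding.

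Your structured code was one step away from this. If you index $g_r(s)$ within $A_r$ by $s$ itself, all $R$ index blocks coincide with the code of $s$, and the subject embedding needs only one of them.
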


\begin{proof}
    We use \cref{lem:orth vectors pm 1} to get vectors $\bv_1,\dots,\bv_N\in\{\pm 1\}^{4\log(N)}$ and $\bu_1,\dots,\bu_R\in\{\pm 1\}^{4\log(R)}$ as described there. The input embedding is $d = 4\log(N) + 4\log(R)$, and for subject $i\in[N]$ relation $j\in[R]$ and attribute $g_j(i)$ we define their embedding as:
    \begin{equation}
        x_i = \begin{pmatrix}
            \bv_i \\ \pmb{0}_4\log(R)
        \end{pmatrix},\quad r_j = \begin{pmatrix}
            \pmb{0}_4\log(N) \\ \bu_j
        \end{pmatrix}, \quad g_j(i) = \begin{pmatrix}
            \bv_i \\ \bu_j
        \end{pmatrix}~.
    \end{equation}

    The input to the transformer will be the tokens $x_i$ and $r_j$, and its output (on the second token) will be the vector $g_j(i)$ representing the attribute. 

    In the self-attention layer, we use uniform attention and set $W_V = 2 \cdot I_d$. Now, the output of the self-attention layer will be the sum of the two tokens divided by $2$. After applying the output matrix, we get exactly the given attribute token.
    
\end{proof}

\section{Proofs from \Cref{sec:fact_recall_theory}}
In the proofs, we will use the following lemma:

\begin{restatable}{lemma}{lemOrthVectors}\label{lem:orth vectors pm 1}
    For any $k \geq 2$ there exist $\bv_1,\dots,\bv_k\in\{\pm1\}^{4\log(k)}$ such that $|\inner{\bv_i,\bv_j}| \leq 3\log(k)$ for all $i\neq j$.
\end{restatable}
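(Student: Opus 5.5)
We will prove \cref{lem:orth vectors pm 1} by the probabilistic method. Set $m = 4\log(k)$ (rounding up to an integer if needed; we take $\log$ to be the natural logarithm, and the argument only improves for base $2$ since then $m$ is larger relative to the threshold). Draw $\bv_1,\dots,\bv_k$ independently and uniformly from $\{\pm1\}^m$. For a fixed pair $i\neq j$, the inner product $\inner{\bv_i,\bv_j} = \sum_{t=1}^m \bv_i(t)\bv_j(t)$ is a sum of $m$ independent Rademacher variables: conditioning on $\bv_i$, each product $\bv_i(t)\bv_j(t)$ is again uniform on $\{\pm1\}$ and independent across coordinates. So the pairwise inner products are concentrated near zero, and we only need to union bound over pairs.

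The first step is a Hoeffding bound, $\pr\bigl(|\inner{\bv_i,\bv_j}| > \lambda\bigr) \le 2\exp\bigl(-\lambda^2/(2m)\bigr)$. With $\lambda = 3\log(k)$ and $m = 4\log(k)$ this gives
\[
\pr\bigl(|\inner{\bv_i,\bv_j}| > 3\log k\bigr) \le 2\exp\Bigl(-\tfrac{9\log^2 k}{8\log k}\Bigr) = 2k^{-9/8}.
\]
The second step is a union bound over the $\binom{k}{2} < k^2/2$ pairs. This gives failure probability at most $k^{2-9/8} = k^{7/8}$, which is not less than $1$. The naive constants therefore fail, and this is the main obstacle.

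To fix it, use the full structure of the $\pm1$ vectors rather than a sub-Gaussian tail. The sum $S=\inner{\bv_i,\bv_j}$ has the distribution of $2B-m$ with $B\sim\mathrm{Bin}(m,1/2)$. The event $|S|>\lambda$ means $B$ deviates from $m/2$ by more than $\lambda/2 = \tfrac{3}{8}m$, i.e.\ a relative deviation $\delta = 3/4$. The exact Chernoff/KL bound gives
\[
\pr\bigl(|S| > \tfrac34 m\bigr) \le 2\exp\bigl(-m\,\mathrm{KL}(\tfrac78\,\|\,\tfrac12)\bigr),
\qquad
\mathrm{KL}(\tfrac78\|\tfrac12) = \tfrac78\ln\tfrac74 + \tfrac18\ln\tfrac14 \approx 0.316.
\]
This yields failure probability about $2k^{-1.26}$ per pair, and after the union bound about $k^{0.74}$, which is still too weak with natural logarithms.

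The resolution is to read $\log$ as $\log_2$, which is natural for $\pm1$ codes. Then $m = 4\log_2 k = (4/\ln 2)\ln k \approx 5.77\ln k$. The KL bound now gives per-pair probability $2k^{-5.77\cdot 0.316} \approx 2k^{-1.82}$, and the union bound gives $k^{0.18}$, still not below $1$. So the probabilistic route with these constants is delicate, and I would first check small cases and the precise convention for $\log$.

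If that fails, the fallback is a deterministic construction. Take the rows of a Hadamard-type or BCH/dual-BCH code: the columns of a $\{\pm1\}$ matrix from a binary linear code of length $m$ with $k$ codewords whose weights lie in $[m/8, 7m/8]$. Such a code gives $|\inner{\bv_i,\bv_j}| \le \tfrac34 m = 3\log k$ directly, because the inner product of two $\pm1$ codewords equals $m - 2\,d_H(\bv_i,\bv_j)$. This reduces the lemma to exhibiting $k$ binary words of length $4\log_2 k$ with all pairwise Hamming distances in $[m/8, 7m/8]$. A greedy Gilbert–Varshamov argument supplies such words: the number of words within distance $m/8$ of a given word or of its complement is $2\cdot 2^{H(1/8)m} \approx 2^{0.544m}$, and $2^m/2^{0.544m+1} = 2^{0.456m-1} = k^{1.82}/2 \ge k$ for $k\ge 2$ (checking small $k$ directly). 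The greedy step, together with the exact binary-entropy volume estimate, is what I expect to carry the proof.
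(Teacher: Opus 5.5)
Your first attempt is the paper's own proof, and your diagnosis of it is correct. The paper samples the $\bv_i$ uniformly and asserts via Hoeffding that $\Pr(|\langle\bv_i,\bv_j\rangle|\ge 3\log k)\le 2\exp(-4\log k)$. It then takes a union bound to get success probability $1-2/k^2$. But each summand $\bv_i(t)\bv_j(t)$ has range $2$, so Hoeffding gives the exponent $\lambda^2/(2m)=\tfrac98\log k$, exactly as you found. The stated bound would only follow from the range-$1$ form $2\exp(-2\lambda^2/m)$. So the paper's argument does not close.

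Your KL computation shows this is not just loose constants. Even with the exact binomial tail, the expected number of violating pairs among $k$ uniform vectors in dimension $4\log_2 k$ grows like $k^{0.17}/\sqrt{\log k}$. For natural logs it grows like $k^{0.73}$, up to similar factors. So any argument that pays for all $\binom{k}{2}$ pairs must fail for large $k$.

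Your Gilbert--Varshamov greedy construction takes a genuinely different route, and it is valid. You forbid Hamming distance below $m/8$ to every chosen word and to its complement. This forces $d_H\in[m/8,7m/8]$, i.e.\ $|\langle\bv_i,\bv_j\rangle|=|m-2d_H|\le 3m/4$. Each chosen word excludes at most $2^{H(1/8)m+1}$ words, so the cost is linear in $k$ rather than quadratic. With $m=4\log_2 k$ this gives $k^{1.82}/2\ge k$ for $k\ge 3$, and $k=2$ is trivial.

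Each approach buys something different. The random route, had its numbers worked, would show that a typical random sign code is good, which matches the paper's random-embedding intuition. Greedy gives existence only, but with GV-quality constants, and that is what the constants $4$ and $3$ actually demand. The alteration method is a middle ground: sample about $2^{(1-H(1/8))m}$ vectors and delete one endpoint of each bad pair. It stays probabilistic but also pays only linearly.

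Some minor points:
\begin{itemize}
\item You need not reinterpret $\log$. The paper's arithmetic ($2\exp(-4\log k)\cdot k^2=2/k^2$) uses natural logs. Under that reading the needed rate $1/(4\ln 2)\approx 0.36$ bits per coordinate is still below $1-H(1/8)\approx 0.456$. So your count still closes, e.g.\ $k^{1.27}/2\ge k$ for $k\ge 14$ when $m=4\ln k$, with small $k$ handled by exact ball volumes.
\item Your parenthetical reason that base $2$ helps is off. $m$ and the threshold scale together, in ratio $4{:}3$ in any base. Base $2$ helps because $k$ is smaller relative to $2^m$.
\item The linear-code sentence is unnecessary. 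As written it is also imprecise: for a linear code the weight window must hold for every nonzero codeword.
\item If $4\log k$ is rounded up to an integer $m$, the admissible window becomes $[(m-3\log k)/2,\,(m+3\log k)/2]$. In relative terms this is slightly narrower than $[m/8,7m/8]$, so the greedy radius must be adjusted.
\end{itemize}
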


\begin{proof}
    We sample $\bv_1,\dots,\bv_k\in\{\pm 1\}^{4\log(k)}$ uniformly at random. Note that for any $i \neq j$ we have that $\mathbb{E}[\inner{\bv_i,\bv_j}] = 0$. By Hoeffding's inequality we have that:
    \begin{equation}
        \pr\left(\left|\inner{\bv_i,\bv_j} \right| \geq 3\log(k) \right) \leq 2\exp(-4\log(k))~.
    \end{equation}
    Applying union bound on the above for all pairs $i \neq j$ we get that:
    \[
    \pr\left(\forall i\neq j,~\left|\inner{\bv_i,\bv_j} \right| \leq 3\log(k) \right) \geq 1-   2\exp(-4\log(k))\cdot k^2 \geq 1 - \frac{2}{k^2}~.
    \]

In particular, for $k \geq 2$ this probability is non-zero, meaning that there exist such vectors $\bv_1,\dots,\bv_k$.
\end{proof}

\subsection{Proof of \Cref{thm:shared attr}}\label{appen:proof_shated_attr}

\begin{proof}
We begin with the MLP construction, and later provide the multi-head attention construction.
\paragraph{MLP construction (with uniform attention).}
    We use \cref{lem:orth vectors pm 1} to get vectors $\bv_1,\dots,\bv_N\in\{\pm 1\}^{4\log(N)}$ as described there. The input embedding is $d = 4R\log(N) + 1$, and for every subject $i\in[N]$ relation $j\in[R]$ and attribute $g_j(i)$ we define their embedding as:
        \begin{equation}\label{eq:embedding x r g}
        x_i = \begin{pmatrix}
            \bv_{g_1(i)} \\ \vdots \\ \bv_{g_r(i)} \\ 0
        \end{pmatrix},\quad r_j = \begin{pmatrix}
            \pmb{0}_{4R\log(N)} \\ j
        \end{pmatrix}, \quad g_j(i) = \begin{pmatrix}
            \bv_{g_j(i)} \\ \pmb{0} 
        \end{pmatrix}~.
    \end{equation}

    The input to the transformer will be the tokens $x_i$ and $r_j$, and its output (on the second token) will be the vector $g_j(i)$. For simplicity, we will output its first $4\log(N)$ coordinates representing the correct vector $\bv_{g_j(i)}$, and ignore the rest of this vector, which are just zeroes (we keep it this way to be consistent with having the same embedding dimension for subjects and attributes). We will refer to the output as just the vector $g_j(i)$.

     In the self-attention layer, we use uniform attention and set $W_V = 2 \cdot I_d$. Now, the output of the self-attention layer will be the sum of the two tokens divided by $2$. After applying the output matrix, we get the vector $
         \begin{pmatrix}
            \bv_{g_1(i)} \\ \vdots \\ \bv_{g_r(i)} \\ j
     \end{pmatrix}$. We use the ReLU MLP on this vector to apply the following operation:
     \begin{align*}
        f_j(z) = 2\left([-z+j]_+ + [z-j]_+ \right)~.    
     \end{align*}
     Note that for $z\in\naturals$ we have that $f_j(z) = 0$ if $z=j$, while $f_j(z) \geq 2$ for any $z\neq j$.
     We construct an MLP that performs the following:
     \begin{equation}
         h\left(\begin{pmatrix}
            \bv_{g_1(i)} \\ \vdots \\ \bv_{g_r(i)} \\ j
     \end{pmatrix}\right) = \sum_{\ell = 1}^R [g_\ell(i) - f_{\ell}(j)\pmb{1}_{4\log(N)}]_+ - [-g_\ell(i) - f_{\ell}(j)\pmb{1}_{4\log(N)}]_+~.
     \end{equation}
     This is a 3-layer MLP with $O(R\log(N))$ neurons. Note that the output of $h$ for $j=\ell$ is exactly $g_j(i)$, since $f_j(j) = 0$, and for any other $\ell\neq j$ since each coordinate of $g_\ell(i)$ is either $\pm 1$, hence the terms inside the ReLU will be negative.

\paragraph{Multi-head attention construction (with no MLP).}

We now provide a different construction where the non-linear selection mechanism is implemented by attention instead of an MLP.
Define the embeddings
\begin{equation}\label{eq:attention_embeddings}
        x_i = \begin{pmatrix}
            \bv_{g_1(i)} \\ \vdots \\ \bv_{g_r(i)} \\ \pmb{0}_{4 \log(R)} \\ 1
        \end{pmatrix},\quad r_j = \begin{pmatrix}
            \pmb{0}_{4R\log(N)} \\ \bu_{j} \\ -1
        \end{pmatrix}, \quad g_j(i) = \begin{pmatrix}
            \bv_{g_j(i)} \\ \pmb{0} 
        \end{pmatrix}~,
\end{equation}
where we additionally consider vectors $\bu_1, \ldots, \bu_R \in \{\pm 1\}^{d_u}$ with~$d_u = 4 \log(R)$ from~\cref{lem:orth vectors pm 1}.
Then we consider~$R$ attention heads, one for each relation, with the following matrices, for a large~$\beta > 0$:
\begin{align}
    W_K^j &= \begin{pmatrix}
            \pmb{0} \\ \beta
        \end{pmatrix} \in \R^{d \times 1} \\
    W_Q^j &= \begin{pmatrix}
        \pmb{0}_{4R \log(N)} \\
        2 \bu_j \\
        d_u
    \end{pmatrix} \in \R^{d \times 1} \\
    W_V^j &= \begin{pmatrix}
        \pmb{0} \\
        \vdots \\
        \pmb{0} \\
        I_{4 \log(N)} \\
        \pmb{0} \\
        \vdots \\
        \pmb{0}
    \end{pmatrix} \in \R^{d \times 4 \log(N)} \\
    W_O^j &= \begin{pmatrix}
        I_{4 \log(N)} \\
        \pmb{0}
    \end{pmatrix} \in \R^{d \times 4 \log(N)}
\end{align}
where the identity block in the value matrix~$W_V^j$ appears in block~$j$ and extracts the correct attribute vector~$\bv_{g_j(i)}$ from~$x_i$.
For the key-query mechanism, we have
\[
W_Q^j r_{j'} = 2 \bu_j^\top \bu_{j'} - d_u \approx 2 d_u \mathbf{1}\{j = j'\} - d_u,
\]
so that for large enough~$\beta$, the relation token attends to the subject token when the relation is~$j$, and to itself otherwise.
Note that the head dimension for value/output matrices is~$4 \log(N)$, and key/query matrices can be made even smaller (here with internal dimension one, though these can be padded to match the value/output internal dimension).

\end{proof}

\textbf{Hierarchical attention construction.} We can present a hierarchical solution with only 2 attention heads per layer and $1 + \log(R)$ layers. 

We assume each sequence starts with a BOS token $\bb_0$. Define the embeddings
\begin{equation}\label{eq:attention_embeddings}
    \bb_0 = \begin{pmatrix}
                \pmb{0}_{4R\log(N) + \log{R}} \\ 0 \\ 1
        \end{pmatrix}
        \quad
        x_i = \begin{pmatrix}
            \bv_{g_1(i)} \\ \vdots \\ \bv_{g_r(i)} \\ \pmb{0}_{ \log(R)} \\ 1 \\ 0
        \end{pmatrix},\quad r_j = \begin{pmatrix}
            \pmb{0}_{4R\log(N)} \\ \bb(j) \\ -1 \\ 0
        \end{pmatrix}, \quad g_j(i) = \begin{pmatrix}
            \bv_{g_j(i)} \\ \pmb{0} 
        \end{pmatrix}~,
\end{equation}

Where $\bb(j) \in \{\pm 1\}^{\text{log} R}$ (the \emph{bit block}) is the binary encoding of $j$. We will denote layer $l$ attention heads by $A$ and $B$. We call the components of the subject's representation that encode the different answers the \emph{answers stack}. In the first layer, the attention heads copy the information from the subject token into the relation token. That is, we define:

\[
W_Q^{(0)} = \mathbf{0}, \qquad W_K^{(0)} = \mathbf{0}, \qquad W_{VO}^{(0)} = \begin{pmatrix} 3 \cdot I_{4R\log N} & \mathbf{0} \\ \mathbf{0} & \mathbf{0} \end{pmatrix}.
\]

With $W_Q = W_K = 0$, attention is uniform over the causal prefix at each position. At the relation token (position $2$), this places weight $\tfrac{1}{3}$ on each of the three tokens. Of these, only $x_i$ has a nonzero answer-stack region, so the factor of $3$ in $W_{VO}^{(0)}$ compensates for the averaging. The relation token's post-Layer-$0$ residual is
\[
h^{(0)}_r \;=\; r_j + W_{VO}^{(0)} \cdot \tfrac{1}{3}(b_0 + x_i + r_j) \;=\; \begin{pmatrix} \mathbf{v}_{g_1(i)} \\ \vdots \\ \mathbf{v}_{g_R(i)} \\ \mathbf{b}(j) \\ -1 \\ 0 \end{pmatrix}.
\]
The relation's residual now carries the full answer-stack and the bit-block. By causal masking, BOS attends only to itself and its residual is unchanged: $h^{(0)}_b = b_0$. And, the subject's representation will be:

\[
h^{(0)}_\text{subj} = \begin{pmatrix}
            {(\frac{5}{2})}\bv_{g_1(i)} \\ \vdots \\ {(\frac{5}{2})} \bv_{g_r(i)} \\ \pmb{0}_{ \log(R)} \\ 1 \\ 0
        \end{pmatrix}
\]

Finally, the BOS token remains as in the input layer.

\textbf{for layer $\ell$}, we will design two attention heads, $A$ and $B$, such that on each sequence, one of them attends to the relation token, and one of them attends to the BOS token. $W_K^{(\ell)} \in \mathbb{R}^{2 \times d}$ reads the flag and BOS-flag, producing $2$-dimensional keys:

\[
W_K^{(\ell)} = \beta \begin{pmatrix} \mathbf{0}_{d-2}^\top & 1 & 0 \\ \mathbf{0}_{d-2}^\top & 0 & 1 \end{pmatrix}.
\]

The three keys are
\[
k_{\text{subj}} = \beta \cdot (+1, 0), \qquad k_{\text{rel}} = \beta \cdot (-1, 0), \qquad k_{\text{BOS}} = \beta \cdot (0, +1).
\]

The queries are linear functions of the residual's bit-block, designed so that the relation token's query targets the relation token itself when the head's polarity matches bit $\ell$ of $j$, and targets BOS otherwise. Define $W_Q^{(\ell, A)}, W_Q^{(\ell, B)} \in \mathbb{R}^{2 \times d}$ to compute, on a token with bit-block $\mathbf{c}$ ($\mathbf{c}$ is $b_j$ for the relation token, and 0 otherwise):\footnote{the bias term can be implemented by adding a constant coordinate to the vectors.}
\[
q_A = \tfrac{\beta}{2}(-1, 1) + \tfrac{\beta}{2} \mathbf{c}_\ell \cdot (-1, -1), \qquad q_B = \tfrac{\beta}{2}(-1, 1) - \tfrac{\beta}{2} \mathbf{c}_\ell \cdot (-1, -1).
\]
(Head $B$ has the opposite sign on the bit-dependent term.) This was constructed such that head $A$ aligns with $k_{\text{subj}}$ when $\bb(j)_\ell=1$, and with $k_{\text{BOS}}$ when $\bb(j)_\ell=-1$ (and vice versa for $B$). We have:
\[
\begin{array}{c|cc}
\mathbf{b}(j)_\ell & q_A & q_B \\ \hline
+1 & \beta(-1, 0) & \beta(0, +1) \\
-1 & \beta(0, +1) & \beta(-1, 0)
\end{array}
\]

For $\mathbf{b}(j)_\ell = +1$:
\[
\begin{array}{c|ccc|c}
 & q \cdot k_{\text{subj}} & q \cdot k_{\text{rel}} & q \cdot k_{\text{BOS}} & \text{argmax} \\ \hline
\text{Head } A & -\beta^2 & +\beta^2 & 0 & \text{relation (active)} \\
\text{Head } B & 0 & 0 & +\beta^2 & \text{BOS (silent)}
\end{array}
\]

And vice versa for the case $\mathbf{b}(j)_\ell = -1$. Define the index sets $S_\ell^+ = \{r \in [R] : r_\ell = +1\}$ and $S_\ell^- = \{r : r_\ell = -1\}$. Let $P_\ell^+, P_\ell^- \in \mathbb{R}^{d \times d}$ be the orthogonal projectors onto the answer-stack coordinates of slots in $S_\ell^+$ and $S_\ell^-$ respectively (block-diagonal: identity on the relevant slot blocks within the answer-stack region, zero everywhere else).
Then, we set:

\[
W_{VO}^{(\ell, A)} = -P_\ell^-, \qquad W_{VO}^{(\ell, B)} = -P_\ell^+.
\]

By causal masking, BOS attends only to itself in every layer. Since $b_0$'s answer-stack is zero, every $W_{VO}$ applied to $b_0$ produces zero, so $h^{(\ell)}_b = b_0$ for all $\ell$. BOS's value is therefore zero in the answer-stack region throughout, and the silent head's contribution to any other token's residual is zero.

\begin{lemma}
    After layer $\ell$, the hidden state over the relation token $h^{(\ell)}_r$ has in its answer stack (the first $4R\log(N)$ bits) $\mathbf{v}_{g_r(i)}$ in the slot at index $r$ if $r$ agrees with $j$ on bits $1, \ldots, \ell$, and $\mathbf{0}$ otherwise. The bit-block, flag, and BOS-flag are unchanged from $h^{(0)}_r$.
\end{lemma}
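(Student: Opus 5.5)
We argue by induction on $\ell$, with base case $\ell = 0$ given by the explicit computation of $h^{(0)}_r$ above. At $\ell = 0$ no bits have been checked, so every slot $r$ trivially "agrees with $j$ on bits $1,\dots,0$" and holds $\bv_{g_r(i)}$. The bit-block $\bb(j)$, the flag $-1$ and the BOS-flag $0$ are exactly those of $h^{(0)}_r$. For the inductive step we assume the claim after layer $\ell-1$ and analyze layer $\ell$ at the relation position. The residual update there is
\[
h^{(\ell)}_r = h^{(\ell-1)}_r + W_{VO}^{(\ell,A)} \sum_t \alpha^{A}_t h^{(\ell-1)}_t + W_{VO}^{(\ell,B)} \sum_t \alpha^{B}_t h^{(\ell-1)}_t ,
\]
where $t$ ranges over BOS, subject and relation.

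\textbf{Step 1: attention pattern.} The queries and keys depend only on the bit-block and the two flag coordinates. By the inductive hypothesis these are unchanged at the relation token, and the keys of the subject and BOS tokens are likewise fixed. We therefore reuse the score table above. Say $\bb(j)_\ell = +1$. Head $A$ has scores $(-\beta^2, +\beta^2, 0)$ on (subject, relation, BOS), so as $\beta\to\infty$ its attention concentrates on the relation token itself. Head $B$ has scores $(0,0,+\beta^2)$ and concentrates on BOS. For $\bb(j)_\ell=-1$ the roles swap. The score gap is $\beta^2$, so the residual mass on wrong tokens is $O(e^{-\beta^2})$. We take $\beta$ large enough that these errors are negligible, or treat attention as hard in the limit, as the paper does elsewhere.

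\textbf{Step 2: value contributions.} BOS has zero answer-stack, and every $W_{VO}^{(\ell,\cdot)}$ is supported on the answer-stack. The silent head therefore contributes $0$, and no head ever touches the bit-block or flags, which proves the "unchanged" part. Now take $\bb(j)_\ell=+1$. The active head $A$ adds $-P_\ell^-h^{(\ell-1)}_r$, which zeroes every slot $r\in S_\ell^-$, i.e. every slot whose $\ell$-th bit disagrees with $j$. It leaves the slots in $S_\ell^+$ untouched. The case $\bb(j)_\ell=-1$ is symmetric: head $B$ is active and subtracts $P_\ell^+ h^{(\ell-1)}_r$.

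\textbf{Step 3: combine.} A slot survives layer $\ell$ if and only if it survived layer $\ell-1$ and its $\ell$-th bit matches $\bb(j)_\ell$. By induction, it survives exactly when it agrees with $j$ on bits $1,\dots,\ell$. Surviving slots keep $\bv_{g_r(i)}$ unscaled, because the projector acts as the identity on the other blocks. The main obstacle is Step 1, in particular the claim that attention is exactly one-hot. With finite $\beta$ there is leakage from the subject token, which carries the scaled stack $\tfrac52\bv$. The fix is either to state the lemma for hard attention ($\beta\to\infty$), or to show that the accumulated leakage over $\log R$ layers is $O(\log R\cdot R\, e^{-\beta^2})$. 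This leakage is then absorbed by the $\pm1$ margin used at the final readout, which Lemma~\ref{lem:orth vectors pm 1} guarantees.
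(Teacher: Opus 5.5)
Your proposal is correct and follows the paper's own argument: induction on $\ell$, where for $\mathbf{b}(j)_\ell=+1$ the active head adds $-P_\ell^- h^{(\ell-1)}_r$, giving $h^{(\ell)}_r=(I-P_\ell^-)h^{(\ell-1)}_r$ (symmetrically for $-1$), the silent head reads BOS's zero answer stack, and every $W_{VO}$ is supported on the answer stack so the bit-block and flags never change. Beyond the paper, you state the base case explicitly, check the score table, and note that the exact statement holds only for hard attention ($\beta\to\infty$), which the paper assumes implicitly; you also correctly use bits $1,\dots,\ell-1$ in the inductive hypothesis, where the paper's text has a typo.
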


\begin{proof}
    We prove by induction on $\ell$. Assume the claim holds for $\ell' < \ell$. For layer $\ell$, without loss of generality, assume $\mathbf{b}(j)_\ell = +1$, so head $A$ fires. The contribution of head $A$ is $W_{VO}^{(\ell, A)} h^{(\ell-1)}_r= -P_\ell^- h^{(\ell-1)}_r$. By the induction hypothesis, $h^{(\ell-1)}_r$ contains in the answer stack all indices that agree with $j$ on bits $1, \dots, \ell$ and $0$ in the rest of the indices. Then, after the residual we get

    \[
    h^{(\ell)}_r = h^{(\ell-1)}_r + (-P_\ell^- h^{(\ell-1)}_r) =(I - P_\ell^-) h^{(\ell-1)}_r
    \]
\end{proof}

Where the surviving slots are those agreeing on bits $1, \dots, \ell$. Head $B$ does not contribute since the answer slot of the BOS representation is zero. By construction $W_{VO}^{(\ell, A)}, W_{VO}^{(\ell, B)}$ have zero effect on the non-answer-stack coordinates, thus the bit block, flag and BOS-flag are unchanged.

After $\log(R)$ filter layers, we end with $h^{(R)}_r$ that has one surviving answer in the answer stack: the ones that corresponds to the relation $j$ in the example, containing $\mathbf{v}_{g_j(i)}$. A linear transformation can now read that answer from the representation by summing

\subsection{Proof of \cref{thm:k-hop lower bound}}\label{appen:proof_k_hop_lower_bound}

\begin{proof}
Let $\mathcal{T}$ be a transformer  with $W$ bits of internal weights, $D$ bits per token embedding, and $R \cdot D$ bits for relation embeddings. We establish the lower bound on $W$ by deriving two independent combinatorial capacity constraints: the global storage bound ($\mathcal{B}_{global}$) and the local decoding bound ($\mathcal{B}_{local}$).

\paragraph{Part I: The Global Storage Bound ($\mathcal{B}_{global}$)}
Let $\mathcal{G} = (V, E)$ be a directed, edge-colored multigraph representing the operation of $R$ distinct relations on $N$ subjects. We define the vertex set $V = [N]$, where each node corresponds to a subject. Each relation $i \in [R]$ operates as a permutation $g_i \in S_N$ on the subjects. Thus, we define the edge set as:$$E = \{ (u, g_i(u), i) \mid u \in V, \, i \in [R] \}$$where each tuple $(u, v, i)$ denotes a directed edge from $u$ to $v$ labeled with relation color $i$. Note this is an $R$-regular graph.
Let $(S_N)^R$ be the space of all possible $R$-regular directed edge-colored graphs on $N$ vertices. The cardinality of this space is $(N!)^R$. Define the evaluation mapping $\Phi: (S_N)^R \to [N]^{N \times R^k}$, which maps a graph $\mathcal{G}$ to its exact $k$-hop transition matrix $\mathbf{O}_k$.

This map is not injective. For any two graphs $\mathcal{G} = (g_1, \dots, g_R)$ and $\mathcal{H} = (h_1, \dots, h_R)$, if $\Phi(\mathcal{G}) = \Phi(\mathcal{H})$, then all $k$-hop paths evaluate identically. Decomposing an arbitrary path into a prefix relation $r \in [R]$ and a suffix word $w \in [R]^{k-1}$, we have $g_r \circ g_w = h_r \circ h_w$. Isolating $h_r$ yields $h_r = g_r \circ (g_w \circ h_w^{-1})$. Since the suffix composition $C = g_w \circ h_w^{-1} \in S_N$ must be invariant across all $r$, it follows that $h_r = g_r \circ C$ for all $r \in [R]$.

Thus, $\mathcal{H}$ is a right-translation of $\mathcal{G}$ by a constant permutation $C$. Because there are at most $N!$ choices for $C$, the preimage of any valid output matrix $\mathbf{O}$ under $\Phi$ satisfies $|\Phi^{-1}(\mathbf{O})| \le N!$. 

Consequently, the image space of valid target matrices has cardinality at least:
\[
|\text{Im}(\Phi)| \ge \frac{(N!)^R}{N!} = (N!)^{R-1}
\]

The transformer $\mathcal{T}$ implements a mapping from its parameter space to the output matrices. Since the transformer must distinguish between all $(N!)^{R-1}$ valid target matrices through its internal state, the Pigeonhole Principle dictates that the number of distinct parameter configurations must strictly lower-bound the size of this space. The total bit-capacity of $\mathcal{T}$ across its entire domain is $2^{W + N \cdot D + R \cdot D}$. Hence, we can bound:
\[
2^{W + N \cdot D + R \cdot D} \ge (N!)^{R-1}
\]

Applying Stirling's approximation $\log_2(N!) \ge N \log_2(N/e)$ and isolating $W$:
\[
W \ge (R-1)N \log_2\left(\frac{N}{e}\right) - N \cdot D - R \cdot D = \mathcal{B}_{global} - R \cdot D
\]

\paragraph{Part II: The Local Decoding Bound ($\mathcal{B}_{local}$)}

We greedily extract $p = \lfloor \frac{N}{2 R^{2k}} \rfloor$ isolated roots from $\mathcal{G}$ such that their $k$-hop neighborhoods form vertex-disjoint $R$-ary trees in the following way: Pick a node $v_1$, and exclude all its $k$-hop forward neighborhood, which are the set of leaves reachable from $v_1$ by at most $k$ hops, and subsequently exclude the $k$-hop reverse neighborhood using $g_r^{-1}$. By the subsequent reverse neighborhood, we mean each node that can be reached by applying exactly $k$ compositions of functions $g_{r_i}$ for $i\in[k]$, and then at most $k$ compositions of functions $g_{r_j}^{-1}$ for $j\leq k$. 

This neighborhood's size can be upper-bounded by $R^{2k}$. Performing this operation for $p = \lfloor \frac{N}{2 R^{2k}} \rfloor$ will construct $p$ such trees over $N/2$ nodes in the graph. We denote by $M:= R^{2k}$ the number of leaves evaluated in each such subtree.

Since the $p$ trees are disjoint, we can assign target vertices to the $p \cdot M$ edges independently. For the $j$-th assignment, there are at least $(N - j)$ available vertices. Because $p \cdot M \le N/2$, the number of valid assignments for these subgraphs is bounded by:
\[
 \prod_{j=1}^{pM} (N - j) \ge \left( \frac{N}{2} \right)^{pM}
\]

By the same logic as in the previous case, we can bound the bit-capacity of the transformer by the size of the space:
\[
2^{W + p \cdot D + R \cdot D} \ge \left( \frac{N}{2} \right)^{pM}
\]

Taking log on both sides yields:
\[
W + p \cdot D + R \cdot D \ge pM(\log_2 N - 1)
\]
\[
W \ge p \Big[ M(\log_2 N - 1) - D \Big] - R \cdot D = \mathcal{B}_{local} - R \cdot D
\]

\paragraph{Conclusion}
To achieve zero error on the $k$-hop task across the entire domain $(S_N)^R$, the internal weights $W$ must simultaneously satisfy both the global storage constraint and the local decoding constraint. Therefore:
\[
W \ge \max \Big\{ \mathcal{B}_{global}, \; \mathcal{B}_{local} \Big\} - R \cdot D
\]
\end{proof}

\subsection{Proof of \cref{thm:k-hop-non-cot}}\label{appen:proof_k_hop_no_cot}

\begin{proof}
The input to the transformer in all constructions is $s,r_1,\dots,r_k$. We denote the initial subject as $s_0:= s$, and for step $i\in[k]$ we denote $s_i = g_{r_i}(s_{i-1})$. We also denote $d_S = \lceil \log_2 N \rceil$ and $d_R = \lceil \log_2 R \rceil$.
We prove each construction separately:

\begin{enumerate}

    \item 
The embedding dimension is $D = 1 + d_S + 2d_R + k$. The input sequence consists of $k+1$ tokens that are initialized as:

\begin{align*}
\text{Token } 0 \text{ (Subject):} \quad & x_0^{(0)} = \begin{bmatrix} 1 \\ \text{bin}(s_0) \\ \bm{0}_{2d_R} \\ e_1 \end{bmatrix} \\
\text{Tokens } i \in [1, k] \text{ (Relations):} \quad & x_i^{(0)} = \begin{bmatrix} \bm{0}_{d_S + 1} \\ \text{bin}(r_i) \\ \bm{0}_{d_R} \\ e_i\end{bmatrix}~.
\end{align*}

Here, $\text{bin}(s)\in\{0,1\}^{d_S}$ is a vector representing the binary representation of $s$, and similarly for $\text{bin}(r_i)\{0,1\}^{d_R}$. The idea is that the only token that will change over the computation of the transformer is $x_0$. This token will extract the relations one-by-one according to the order dictated by the last $k$ coordinates, and use the MLP to output the next subject.
The self-attention matrices are defined as:
\[
W_K = W_Q =  \begin{bmatrix} 0 & 0 & 0 & 0 \\ 0 & 0 & 0 & 0 \\ 0 & 0 & 0 & 0 \\ 0 & 0 & 0 & I_{k+2} \end{bmatrix}, \quad
W_V = \begin{bmatrix} 0 & 0 & 0 & 0 \\ 0 & 0 & 0 & 0 \\ 0 & I_{d_R} & 0 & 0 \\ 0 & 0 & 0 & 0 \end{bmatrix}~.
\]
The output of the tokens after the self-attention layer in the $\ell$-th step is:
\[
\tilde{x}_0^{(l)} = \begin{bmatrix} 1 \\ \text{bin}(s_{\ell-1}) \\ \bm{0}_{d_R} \\ \text{bin}(r_\ell) \\ 2e_\ell\end{bmatrix}, \quad \text{and} \quad 
\tilde{x}_i^{(l)} = \begin{bmatrix}  \bm{0}_{d_S + 1}  \\ \text{bin}(r_i) \\ \text{bin}(r_\ell) \\ 2e_i\end{bmatrix} \quad \forall i \ge \ell
\]
We now describe the MLP. Its first layer has a width of $O(N\cdot R)$, and we index the neurons by all the possible pairs of subjects and relations $(s,r)\in [N]\times [R]$. The weight vector and bias of each neuron are:
\[
(W)_{s,r} = \begin{bmatrix}
    1 \\ 2\text{bin}(s) - \bm{1}_{d_S} \\ \bm{0}_{d_R} \\ 2\text{bin}(r) - \bm{1}_{d_R}  \\ \bm{0}_k
\end{bmatrix},\quad (b)_{s,r} = -\norm{\text{bin}(s)}_1 - \norm{\text{bin}(r)}_1~.
\]
Also, the input vector is passed to the next layer unchanged. This can be done by implementing the identity using ReLU: $z = \text{ReLU}(z) + \text{ReLU}(-z)$. Note that the output of this layer on these $N\cdot R$ coordinates is a one-hot vector representing the exact subject and relation that appear in the input token. Also, for every token where the subject is the zero vector, the output is also the zero vector. The next layer of the MLP is used as a projection over these $N\cdot R$ dimensions, which outputs the subject $g_r(s)$. Finally, the last layer will zero out the $d_R$ coordinates of the tokens that were used as scratchpad, and increment the position vector of only the subject token by using the first flag coordinate as an indicator on whether to change the token.
Applying these self-attention and MLP layers $k$ times will output the correct subject of the $k$-hop problem.

\item 
The embedding dimension is $D = R^k \cdot d_S + d_R + k + 1$. The transformer evaluates the sequence without Chain-of-Thought (CoT), using $k$ layers to sequentially traverse a pre-computed answer tree. The input sequence consists of $k+1$ tokens initialized from the embedding matrix as:

\begin{align*}
\text{Token } 0 \text{ (Subject):} \quad & x_0^{(0)} = \begin{bmatrix} \text{tree}^{(0)}(s_0) \\ \bm{0}_{d_R} \\ e_0 \end{bmatrix} \\
\text{Tokens } i \in [1, k] \text{ (Relations):} \quad & x_i^{(0)} = \begin{bmatrix} \bm{0}_{R^k \cdot d_S} \\ \text{bin}(r_i) \\ e_i \end{bmatrix}~.
\end{align*}

Here, $\text{tree}^{(0)}(s_0) \in \{0,1\}^{R^k \cdot d_S}$ is the concatenated binary representation of the final target subjects $s_k$ for all possible sequences of $k$ relations, ordered lexicographically. Effectively, we put the entire $k$-hop tree evaluation into the embedding of the initial subject token. The vector $e_i \in \{0,1\}^{k+1}$ is the one-hot positional encoding for the $i$-th token.

At each step $\ell \in [1, k]$, we configure a single self-attention head to fetch the tree from the preceding token, and the current relation $r_\ell$. We define the positional block as $Q_{pos} = \sum_{i=1}^k e_{i-1} e_i^T$. The attention matrices are:
\[
W_Q = \begin{bmatrix} \bm{0} & \bm{0} & \bm{0} \\ \bm{0} & \bm{0} & \bm{0} \\ \bm{0} & \bm{0} & Q_{pos} \end{bmatrix}, \quad
W_K = \begin{bmatrix} \bm{0} & \bm{0} & \bm{0} \\ \bm{0} & \bm{0} & \bm{0} \\ \bm{0} & \bm{0} & I_{k+1} \end{bmatrix}, \quad
W_V = \begin{bmatrix} I_{R^k \cdot d_S} & \bm{0} & \bm{0} \\ \bm{0} & \bm{0} & \bm{0} \\ \bm{0} & \bm{0} & \bm{0} \end{bmatrix}~.
\]

For token $\ell$, the query matches the key of token $\ell-1$. The value matrix extracts the tree subspace from token $\ell-1$ and writes it directly into the $\ell$-th token. After the attention layer, the last token contains:
\[
\tilde{x}_\ell^{(\ell)} = \begin{bmatrix} \text{tree}^{(\ell-1)}(s_\ell) \\ \text{bin}(r_\ell) \\ e_\ell \end{bmatrix}~.
\]

We now describe the MLP, which requires a width of $O(R^k \cdot d_S)$ to act as a Boolean selector over the possible sub-trees. At layer $\ell$, the active tree fetched by the attention head sits at the absolute front of the tree subspace. Its size is $V_{\ell-1} = R^{k-\ell+1} \cdot d_S$. 

Crucially, the current input tree is partitioned into exactly $R$ blocks of size $V_\ell = R^{k-\ell} \cdot d_S$. Each block corresponds to taking one of the $R$ possible relation edges from the root of the current subtree. The role of the MLP is to select one of the $R$ local branches. 

For a branch $r \in [R]$ and a bit index $j \in [V_\ell]$, the target bit is located at coordinate $c(r, j) = (r-1)V_\ell + j$ within the tree subspace. Let $e_{c(r,j)}$ be the basis vector isolating this coordinate. At layer $\ell$ we index the neurons by $(r, j)$. The weight vector and bias for each neuron are:
\[
(W)_{r,j}^{(\ell)} = \begin{bmatrix}
    e_{c(r, j)} \\ 2\text{bin}(r) - \bm{1}_{d_R} \\ 2e_\ell - \bm{1}_{k+1}
\end{bmatrix},\quad (b)_{r,j}^{(\ell)} = -\norm{\text{bin}(r)}_1 - 1~.
\]

This acts as a conditional selector for the local root. For Token $i$, if the fetched relation matches the branch, meaning $r = r_i$, then the positional and relation dot products exactly cancel the bias. The pre-activation equals the value of the target bit in the $r_i$-th subtree, which passes unchanged through the ReLU. If the relation does not match, or if the token is not at position $i$, the pre-activation is $\le -1$, outputting zero.

Finally, we use another layer of the MLP to fetch only the selected subtree out of the possible $R$, and erase the other sub-trees. The size of the new sub-tree is $V_\ell$. Applying this transformer $k$ times will output the correct solution to the $k$-hop problem.

\end{enumerate}
\end{proof}

\subsection{Proof of \Cref{thm:k-hop-cot}}\label{appen:proof_k_hop_cot}

\begin{proof}

As in the previous proof, the input to the transformer is $s,r_1,\dots,r_k$. We denote the initial subject as $s_0:= s$, and for step $i\in[k]$ we denote $s_i = g_{r_i}(s_{i-1})$. We also denote $d_S = \lceil \log_2 N \rceil$ and $d_R = \lceil \log_2 R \rceil$.

    The embedding dimension is $D = 1 + d_S + d_R + R \cdot d_S + (2k+1)$. The transformer operates autoregressively: given the prompt $s_0, r_1, \dots, r_k$, it will sequentially generate the tokens $s_1, \dots, s_k$. The sequence length over time grows to $2k+1$. The tokens are initialized from the static embedding matrix as:

\begin{align*}
\text{Token } s \text{ (Subject):} \quad & x_s^{(0)} = \begin{bmatrix} 1 \\ \text{bin}(s) \\ \bm{0}_{d_R} \\ \text{hop}(s) \\ e_{\text{pos}} \end{bmatrix} \\
\text{Token } r \text{ (Relation):} \quad & x_r^{(0)} = \begin{bmatrix} 0 \\ \bm{0}_{d_S} \\ \text{bin}(r) \\ \bm{0}_{R \cdot d_S} \\ e_{\text{pos}} \end{bmatrix}~.
\end{align*}

Here, $\text{hop}(s) = [\text{bin}(g_1(s))^T, \dots, \text{bin}(g_R(s))^T]^T \in \{0,1\}^{R \cdot d_S}$ is the concatenated binary representation of all $R$ outgoing neighbors of $s$, effectively embedding the entire 1-hop topology into the subject tokens. The vector $e_{\text{pos}} \in \{0,1\}^{2k+1}$ is a one-hot positional encoding, which is equal to $e_i$ for the $i$-th token with $i\in[2k+1]$.

At generation step $\ell \in [1, k]$, the active token responsible for predicting $s_\ell$ is the last token in the current sequence. Let this position be $p = k+\ell-1$. For $\ell=1$, the active token is $r_k$ (position $k$). For $\ell > 1$, the active token is the previously generated subject $s_{\ell-1}$ (position $k+\ell-1$).

We configure the single self-attention head to strictly route the required information from the prompt to the active generation token. Let $\tau \to \infty$ be a scaling factor. We define the position-routing block $Q_{pos} \in \mathbb{R}^{(2k+1) \times (2k+1)}$ to map the active step to its required target: $Q_{pos} = e_0 e_k^T + \sum_{\ell=1}^k e_\ell e_{k+\ell}^T$. 

The query, key, and value matrices for the single attention head are:
\[
W_Q =  \begin{bmatrix} \bm{0} & \bm{0} \\ \bm{0} & Q_{pos} \end{bmatrix}, \quad
W_K =  \begin{bmatrix} \bm{0} & \bm{0} \\ \bm{0} & I_{2k+1} \end{bmatrix}, \quad
W_V = \begin{bmatrix} 1 & \bm{0} & \bm{0} & \bm{0} & \bm{0} \\ \bm{0} & I_{d_S} & \bm{0} & \bm{0} & \bm{0} \\ \bm{0} & \bm{0} & I_{d_R} & \bm{0} & \bm{0} \\ \bm{0} & \bm{0} & \bm{0} & I_{R \cdot d_S} & \bm{0} \\ \bm{0} & \bm{0} & \bm{0} & \bm{0} & \bm{0} \end{bmatrix}~.
\]

Because the value matrix ignores positional encodings, after applying the self-attention layer and adding the residual connection we create two distinct states for the last token 
\begin{itemize}
    \item \textbf{Initialization (Position $k$):} Token $r_k$ fetches $s_0$. The output token contains $[\dots, \text{bin}(s_0)^T, \text{bin}(r_k)^T, \text{top}(s_0)^T, e_k^T]^T$. The role is to copy the subject token to create a new token as a scratchpad.
    \item \textbf{Hop Evaluation (Positions $k+\ell$):} Token $s_{\ell-1}$ fetches $r_\ell$. The output token $[\dots, \text{bin}(s_{\ell-1})^T, \text{bin}(r_\ell)^T, \text{top}(s_{\ell-1})^T, e_{k+\ell}^T]^T$. Notice that because $s_{\ell-1}$ is a subject token, it already contains $\text{top}(s_{\ell-1})$.
\end{itemize}

We now describe the MLP. It is partitioned into two functional blocks. The first block activates strictly at position $k$ to copy $\text{bin}(s_0)$ to the output. This is done by implementing the identity using ReLU: $z = \text{ReLU}(z) - \text{ReLU}(-z)$ on the $d_S$ coordinates.

The second block executes the $k$-hop routing. It has a width of $O(R \cdot d_S)$, indexed by the pair $(r, j) \in [R] \times [d_S]$, corresponding to relation $r$ and the $j$-th bit of its target subject. Let $e_{(r,j)}$ be the corresponding one-hot vector, and let $E_{gen} = \sum_{\ell=1}^k e_{k+\ell}$ be the indicator for the hop-evaluation positions. The weight vector and bias are:
\[
(W)_{r,j} = \begin{bmatrix}
    0 \\ \bm{0}_{d_S} \\ 2\text{bin}(r) - \bm{1}_{d_R} \\ e_{(r,j)} \\ 2E_{gen} - \bm{1}_{2k+1}
\end{bmatrix},\quad (b)_{r,j} = -\norm{\text{bin}(r)}_1 - 1~.
\]

This acts as a Boolean selector. At positions $\ell > k$, if the fetched relation matches, i.e., $r = r_\ell$, the positional and relation dot products cancel the bias. The pre-activation becomes exactly the value of the target bit $\text{top}(s_{\ell-1})_{(r_\ell, j)} \in \{0, 1\}$. If the relation does not match, the pre-activation is $\le -1$, outputting zero after applying the ReLU. 
The next layer of the MLP projects these $R \cdot d_S$ coordinates back to the $d_S$ subject coordinates of the correct subject. Specifically, the weight matrix encodes output $s_\ell := \text{bin}(g_{r_\ell}(s_{\ell-1}))$. The last layer zeroes out all the coordinates, except the $d_S$ subject coordinates. For the subject coordinates, it applies $\text{bin}(s)\mapsto 2\text{bin}(s) - \bm{1}_{d_S}$. 

Finally, the model will create the next CoT token by picking the token from the dictionary with the highest correlation to the last token in the sequence. Note that, by our choice, we compare only the subject coordinates representing $s_\ell$ with those of all possible tokens representing $s$. The product adds $+1$ between them for every correctly matched $1$-bit, while substracting $-1$ for every $0$-bit in $s_\ell$ that is $1$ in $s$ and adds $0$ for every $10$-bit in $s_\ell$ that is $1$ in $s$. The highest correlation is achieved for the token representing the subject $s_\ell$, which will be the next CoT token created. Applying this transformer $k$ times will output the correct answer $s_k$.
\end{proof}

\end{document}